\def\eqref#1{equation~\ref{#1}}
\def\1{\bm{1}}
\DeclareMathAlphabet{\mathsfit}{\encodingdefault}{\sfdefault}{m}{sl}
\SetMathAlphabet{\mathsfit}{bold}{\encodingdefault}{\sfdefault}{bx}{n}
\newtheorem{theorem}{Theorem}
\newtheorem{definition}{Definition}
\title{Comprehension Without Competence: Architectural Limits of LLMs in Symbolic Computation and Reasoning\thanks{Code repository: \url{https://github.com/zzhang-cn/comprehension-without-competence}. Video presentation: \url{https://youtu.be/Z4waL0GwhyQ}}}
\author{\name Zheng Zhang\thanks{This work was completed as a personal research project while employed at Amazon Web Services. The views expressed are those of the author and do not necessarily reflect those of Amazon.} \email zzhang@gmail.com
}
\begin{document}

\maketitle

\begin{abstract}
Large Language Models (LLMs) display striking surface fluency yet systematically fail at tasks requiring symbolic reasoning, arithmetic accuracy, and logical consistency. This paper offers a structural diagnosis of such failures, revealing a persistent gap between \textit{comprehension} and \textit{competence}. Through controlled experiments and architectural analysis, we demonstrate that LLMs often articulate correct principles without reliably applying them—a failure rooted not in knowledge access, but in computational execution. We term this phenomenon the computational \textit{split-brain syndrome}, where instruction and action pathways are geometrically and functionally dissociated. This core limitation recurs across domains, from mathematical operations to relational inferences, and explains why model behavior remains brittle even under idealized prompting. We argue that LLMs function as powerful pattern completion engines, but lack the architectural scaffolding for principled, compositional reasoning. Our findings delineate the boundary of current LLM capabilities and motivate future models with metacognitive control, principle lifting, and structurally grounded execution. This diagnosis also clarifies why mechanistic interpretability findings may reflect training-specific pattern coordination rather than universal computational principles, and why the geometric separation between instruction and execution pathways suggests limitations in neural introspection and mechanistic analysis.
\end{abstract}

\section{Introduction}
\label{sec:intro}

When asked whether 9.9 is greater than 9.11, Claude Sonnet 4 confidently states that 9.11 is larger, explaining that ``since 90 is greater than 11 in the hundredths place, 9.11 is the larger number''---while simultaneously calculating that 9.11 - 9.9 = 0.21 (incorrect). GPT-4o claims that ``9.11 is greater than 9.9'' in mathematical contexts, invoking software versioning to justify the confusion. Yet when asked to explain how to compare decimal numbers, both models provide flawless algorithmic descriptions: ``Write the numbers one above the other, aligning the decimal points... Compare digit by digit from left to right.'' Claude Sonnet 4 even works through the exact same comparison correctly as an instructional example, concluding that ``9.90 $>$ 9.11.'' Both models articulate decimal comparison procedures with textbook precision while failing to execute them reliably.

This disconnect reveals a fundamental limitation in Large Language Models: they exhibit \emph{comprehension without competence}---a systematic dissociation where models can perfectly explain principles they cannot reliably execute. We term this phenomenon \emph{computational split-brain syndrome}\footnote{The term also appears in distributed systems to describe partitioned networks that cannot coordinate~\citep{brewer2000cap}---we use the neuroscience analogy as it better captures the functional dissociation between capabilities.}, drawing analogy to neurological conditions where different brain systems cannot coordinate effectively. Like patients who can verbally describe actions they cannot perform, LLMs develop geometrically separated pathways for ``knowing about'' procedures versus ``executing'' them.

Here, comprehension and competence are understood behaviorally---when an LLM flawlessly explains decimal comparison algorithms, observers reasonably judge it as ``comprehending'' the procedure, just as we assess human understanding through explanation ability. Yet the model cannot reliably execute what it explains. Both capabilities emerge from pattern completion but as geometrically separated, uncoordinated pathways---explanation without execution. This represents an inversion of Dennett's framework where competence typically precedes comprehension in biological systems~\citep{dennett2017bacteria}.

This split-brain syndrome manifests most visibly as \emph{computational hallucination}—when models must execute multi-step algorithms they cannot actually perform, they generate plausible-sounding answers through pattern completion rather than computation. Unlike hallucinations that contradict facts or context, computational hallucination emerges directly from architectural impossibility: the model compares 9.11 with 9.9 not through arithmetic but by pattern-matching what such calculations typically look like. Every task requiring symbolic manipulation—from arithmetic to logical inference to data operations—becomes a source of systematic hallucination, with models confidently producing answers that seem reasonable but lack computational grounding.

This is not a limitation of scale, training data, or prompting techniques. We argue that computational split-brain syndrome stems from structural constraints in how LLMs represent symbols, learn procedures, and execute reasoning steps. This dissociation challenges fundamental assumptions about intelligence derived from human cognition, where explanatory fluency typically correlates with execution competence. LLMs systematically violate this expectation, revealing that artificial and biological intelligence may operate under fundamentally different principles. Unlike symbolic systems that bind tokens to abstract roles and apply rules over those bindings, LLMs operate as pattern completion engines optimized for next-token prediction. 

\paragraph{Main Claims of This Work.}
Our analysis establishes this disconnect through three interconnected claims:
\begin{itemize}
\item \textbf{Claim 1 (Representation)}: LLM token embeddings encode context-weighted averages that systematically resist automatic domain binding, preventing stable symbolic circuits across computational domains.
\item \textbf{Claim 2 (Computation)}: Feed-forward networks face architectural impossibility of implementing exact symbolic operations through weight configuration alone, forcing them to resort to residual pattern fitting rather than implementing generalizable symbolic procedures.
\item \textbf{Claim 3 (Instruction-Execution Disconnect)}: Next-token prediction objectives decouple algorithmic descriptions from executable behavior.
\end{itemize}

These three constraints must be understood together to explain the computational split-brain syndrome:

Claims 1 \& 2 (Execution failures): When encountering execution instances (e.g., `9.11 > 9.9?'), contextual averaging prevents clean mathematical binding (Claim 1). Even if binding were perfect, transformers face an architectural bottleneck: attention identifies operations and operands but cannot generate new values, forcing FFNs to produce results—yet FFNs cannot implement exact operations, only pattern approximations (Claim 2). Together, these establish that reliable execution is impossible.

Claim 3 (The disconnect): Next-token prediction provides no mechanism for well-learned instructions to guide execution. Both capabilities emerge as separate pattern-matching pathways with no automatic binding between them. The model thus can explain perfectly yet fail at execution—the split-brain syndrome.

Importantly, we demonstrate that these constraints manifest consistently across two critical domains: symbolic computation (arithmetic operations) and relational reasoning (logical inference and variable binding). In both domains, models exhibit the same split-brain syndrome---fluent explanation coupled with unreliable execution---revealing this as limitation of the architecture-training paradigm combination rather than a domain-specific failure.

We now face a fundamental design trade-off. By reducing all language processing to a single continuous optimization task (next-token prediction) over unified token embeddings, transformers achieve their remarkable generality and fluency. However, this same unification prevents the type-based reasoning and domain-specific binding that symbolic computation requires. The contextual averaging that enables rich semantic understanding also contaminates mathematical symbols; the single objective that scales so effectively treats instruction and execution as indistinguishable patterns. Understanding this trade-off clarifies why the computational split-brain syndrome persists across model families and scales: it emerges from the very design choices that make LLMs powerful.

\paragraph{Contributions.}
Our analysis establishes computational split-brain syndrome as a unifying framework for understanding when and why LLMs fail at systematic reasoning. We provide empirical evidence for each architectural constraint through embedding analysis, layer-by-layer computation tracking, and systematic evaluation across arithmetic and logical tasks. We then examine three compensatory strategies---self-scaffolding, tool delegation, and hybrid architectures---showing how each leverages LLMs' pattern completion strengths while working around their execution limitations, yet all require metacognitive capabilities that current architectures fundamentally lack.

Our findings suggest that mechanistic interpretability studies identifying ``arithmetic circuits'' or ``adder neurons'' may be observing sophisticated forms of pattern matching coordination that are formed \textit{path-dependently} during learning, rather than genuine computational subroutines. More critically, the geometric dissociation between instruction and execution pathways raises concerns about both model self-explanations and interpretability research: models may articulate reasoning procedures through neural routes distinct from those used for actual computation, potentially misleading both self-monitoring and researcher analysis of neural activations. Section~\ref{sec:interpretability} extends this analysis to modern interpretability methods, examining whether Sparse Autoencoders and feature-level decomposition escape these limitations or merely surface more stable statistical patterns.

These constraints appear unavoidable within the current paradigm: contextual averaging emerges inevitably from next-token prediction over diverse corpora, while predicting next tokens on internet-scale data forces FFNs to work with attention layers into hierarchical pattern assembly rather than principled computation, and instruction-execution disconnect results from treating all text as equivalent prediction targets. This suggests that computational split-brain syndrome will persist across model families and scales unless addressed through fundamental architectural innovations rather than incremental improvements.

\paragraph{Roadmap.} 
Section~\ref{sec:related-work} reviews foundational work on transformer computation, symbolic reasoning, and recent interpretability findings that inform our analysis. In Section~\ref{sec:symbolic-limits}, we investigate LLM failures in symbolic computation, focusing on arithmetic tasks that reveal unstable embeddings and residual fitting behavior. Section~\ref{sec:relational-reasoning} extends this analysis to relational reasoning, demonstrating that the same architectural bottlenecks underlie LLM failures in multi-step inference, variable binding, and logical consistency. Section~\ref{sec:compensatory-strategies} examines compensatory approaches—self-scaffolding, tool delegation, and hybrid architectures—revealing how all three strategies converge on the same reliable metacognitive limitations that current architectures lack. Section~\ref{sec:pattern-completion} analyzes LLMs as hierarchical pattern completion engines, distinguishing between general intelligence (which LLMs achieve through sophisticated pattern matching) and generalizable intelligence (which requires systematic rule discovery and principled reasoning), and examining the performance cliffs that emerge when tasks transition from pattern recognition to genuine algorithmic discovery. Section~\ref{sec:interpretability} explores how our framework applies to mechanistic interpretability research, examining whether current methods like Sparse Autoencoders identify genuine computational mechanisms or statistical patterns, and proposing tests to distinguish between these possibilities. Section~\ref{sec:reflection} offers a brief reflection on the research journey that led to these insights.

\paragraph{Limitations.}
This analysis focuses on pretrained transformer-based LLMs trained on next-token prediction objectives over natural language corpora, without access to external tools or explicit reasoning scaffolds. We analyze these ``pure'' systems to understand why compensatory strategies have become essential—revealing not just what fails, but why tool use, scaffolding, and hybrid architectures represent necessary workarounds rather than optional enhancements for fundamental architectural constraints. Our empirical evaluation centers on specific model families (LLaMA2, Claude, GPT-4) and may not generalize to fundamentally different architectures or training paradigms. The geometric separation experiments rely on t-SNE projections which may not capture all relevant representational structure. Our theoretical analysis of FFN computational limits focuses on ReLU networks and may not apply to other activation functions. While recent architectural variations like mixture-of-experts or reasoning-augmented models may improve performance, they operate within the same fundamental transformer framework and thus likely face similar constraints—though empirical validation of this hypothesis remains future work.

\section{Related Work} \label{sec:related-work}

This work builds on several lines of inquiry across theory, empirical analysis, architecture, and cognitive science. We organize our review around the emergence, diagnosis, and mitigation of systematic reasoning failures in LLMs, providing a unified framework for understanding when and why current architectures fall short of reliable symbolic computation.

\subsection{Foundations and Theoretical Limits}

\paragraph{Expressivity and Computational Boundaries.}
Recent theoretical frameworks have clarified the fundamental boundaries of transformer expressivity. \citet{peng2024limitations} use communication complexity to prove that transformer layers cannot compose functions (e.g., identifying grandparents in genealogies) when domains are sufficiently large. \citet{strobl2024formal} provide a comprehensive survey documenting how transformers function as recognizers of formal languages, revealing that while scaffolding strategies can improve performance within bounded expressivity classes, they remain fundamentally constrained by architectural limitations rather than achieving true symbolic reasoning capabilities. The Simons Institute workshop~\citep{simons2024transformers} established that transformers act as highly parallel pattern matching engines with limited capacity for long-range sequential computation, constraining their theoretical expressivity in structured symbolic tasks.

However, \citet{li2024chain} prove that scaffolding strategies, techniques in which models generate intermediate steps to guide reasoning, can theoretically expand computational power: While constant-depth transformers can only solve problems in $\mathsf{TC}^0$, adding $T$ intermediate steps enables them to solve any problem solvable by Boolean circuits of size $T$. Chain-of-thought prompting \citep{wei2023cot} exemplifies this approach: instead of directly computing `23 × 17 = ?', models generate `23 × 10 = 230, 23 × 7 = 161, 230 + 161 = 391', converting parallel constraints into serial computation. These analyses can be seen as the theoretical foundation for the use of self-scaffolding as a principled workaround (see Section~\ref{sec:compensatory-strategies}). Recent large reasoning models such as o1 and DeepSeek-R1 \citep{openai2024o1, deepseek2025r1} show improved performance by essentially scaffolding their thought processes at inference time. Yet, they remain fundamentally compensatory rather than curative---they bypass core architectural bottlenecks rather than resolving them.

\paragraph{Training Dynamics and Path Dependence.}
Understanding how reasoning capabilities emerge during training reveals critical insights into LLM limitations. \citet{power2022grokking} demonstrate that late-phase generalization involves a fundamental reorganization of internal representations, showing how training dynamics create path-dependent artifacts that appear as reasoning capabilities but reflect statistical regularities rather than systematic algorithms. \citet{tigges2024circuits} extend this analysis to moderate scale LLMs with decoder only (up to 2.8B parameters), finding that while computational circuits emerge consistently across scale, their specific implementations vary significantly during training. We interpret these findings as suggesting that apparent ``reasoning circuits'' may reflect training-specific pattern coordination rather than universal computational principles, although this inference requires further validation through controlled experiments with different training permutations.

Recent mechanistic analysis of factual learning provides direct evidence for these training-dependent phenomena. \citet{zucchet2025facts} demonstrate that the acquisition of knowledge in transformers follows three distinct phases: general statistics learning, a plateau phase where attention circuits develop, followed by acquisition of individual-specific knowledge. Critically, they show that adding new factual knowledge rapidly corrupts existing memories stored in feedforward layers, confirming that apparent knowledge storage reflects fragile statistical coordination rather than stable representation systems.

\subsection{Empirical Failures Across Domains}

\paragraph{Symbolic Computation Breakdowns.}
Systematic evaluation reveals consistent failures in symbolic computation despite surface fluency. \citet{srivastava2024reasoninggap} quantify sharp performance drops when problem structures deviate from training distributions. \citet{mirzadeh2024gsm} demonstrate through the GSM-Symbolic benchmark that LLMs exhibit noticeable variance when only numerical values change, with performance drops up to 65\% when irrelevant clauses are added. This fragility confirms that current LLMs cannot perform genuine logical reasoning and instead replicate reasoning patterns from training data.

\citet{dziri2023faith} provide comprehensive analysis showing that transformers solve compositional tasks through ``linearized subgraph matching''---memorizing computation patterns rather than learning systematic algorithms. Their frequency analysis reveals that models succeed primarily when test computation subgraphs appeared frequently in training data, failing dramatically on out-of-distribution examples. \citet{nikankin2024arithmetic} confirm this pattern-dependent behavior, showing that models rely on a ``bag of heuristics'' for arithmetic rather than implementing systematic algorithms.\citet{wu2025knowledge} further demonstrate that domain-specific fine-tuning can increase answer accuracy while \emph{degrading} the logical coherence of chain-of-thought explanations, reinforcing the comprehension--execution split.

Multiple recent studies provide converging evidence for the execution limitation across different domains. \citet{yang2024arithmetic} show that LLMs consistently outperform chain-of-thought when generating Prolog programs for external execution rather than attempting direct calculation. The Illusion of Thinking commentary~\citep{lawsen2025illusion-comment} demonstrates similar improvements when LLMs generate Lua code for Tower of Hanoi problems rather than attempting direct move enumeration. \citet{wolff2025tabular} reveal significant deficits when LLMs attempt tabular reasoning tasks requiring multi-step algorithms such as computing averages or finding maximum values—operations that would be trivial for database systems. \citet{cheng2024inductive} provide systematic evidence that LLMs excel at pattern recognition and code generation while struggling with deductive reasoning, with their successful framework architecturally separating symbolic code generation from symbolic code execution. This convergent pattern across arithmetic reasoning, logic puzzles, tabular data operations, and systematic inference demonstrates that LLMs excel at extracting predicates and generating symbolic representations while requiring external systems for reliable computation, precisely the instruction-execution disconnect we formalize.

\paragraph{Relational and Logical Reasoning Failures.}
The same architectural constraints manifest in relational reasoning. \citet{berglund2023reversalcurse} identify the ``reversal curse,'' where models fail to infer bidirectional relations from symmetric facts due to asymmetric training exposure. \citet{nezhurina2023alice} show similar failures in multi-step family reasoning problems that should be trivial for systematic logical processing.

A controlled biography-corpus study by Allen-Zhu et~al.~\citep{AllenZhu2024PhysicsPart3} reinforces the same point: a GPT-2 model trained \emph{exclusively} on person–relation tuples and matching QA pairs achieves near-perfect in-distribution accuracy, yet its OOD QA accuracy on unseen entities collapses below 10\% \emph{unless} the pre-training data are aggressively rewritten, permuted, or translated. This highlights that relational exposure alone is insufficient for lifted reasoning; surface-form augmentation is required to abstract away from individual instances.

\citet{li2024llms} provide systematic evaluation across inductive logic programming tasks, finding that LLMs---despite being 100,000 times larger than specialized logical reasoning models---perform dramatically worse on tasks requiring variable binding and systematic rule application. This performance gap persists even when given explicit logical structure through truth-value matrices, definitively ruling out training data scarcity as an explanation and confirming architectural limitations.

\subsection{Mechanistic Explanations}

Recent mechanistic interpretability research has developed sophisticated tools for tracing computational pathways within transformer models. Attribution graph studies by Lindsey et al. \citet{lindsey2025biology} demonstrate methods for mapping how models transform inputs to outputs through intermediate representational steps, while other circuit analyses reveal consistency patterns across training and scale~\citep{tigges2024circuits}.

\paragraph{Representational Pathologies.}
The instruction-execution disconnect operates at the representational level through systematic failures in symbolic binding. \citet{mcleish2024transformers} demonstrate that altering input geometry alone can dramatically improve simple numerical reasoning, confirming that embedding representations constitute a significant bottleneck for symbolic manipulation. This supports our analysis that contextual averaging prevents stable domain binding required for symbolic manipulation.

\paragraph{Training-Order Dependencies and Pattern Storage.}
Mechanistic interpretability reveals that apparent ``reasoning circuits'' often reflect \emph{distributed heuristics} rather than principled computation. \citet{nikankin2024arithmetic} show that arithmetic behaviour emerges from a ``bag of heuristics'' spread across layers instead of dedicated algorithmic modules. \citet{tigges2024circuits} further demonstrate that, even when a high-level algorithm seems stable, the specific attention heads instantiating it drift throughout training, underscoring strong path dependence. \citet{Ye2024PhysicsPart2} train a 124M-parameter GPT-2 \emph{exclusively} on a procedurally generated grade-school math curriculum and uncover a linearly decodable \emph{dependency-graph} representation that tracks which intermediate quantities depend on which others. We interpret these coherent circuits as artifacts of an unnaturally homogeneous training path; they dissolve once heterogeneous text is introduced, reinforcing our claim that stable algorithmic modules are fragile in realistic open-domain settings.

\paragraph{Reasoning Faithfulness and Post-Hoc Construction.}
Comprehensive analysis of reasoning faithfulness reveals fundamental limitations in how LLMs construct explanations. \citet{plaat2024reasoning} survey extensive evidence that LLM reasoning may be ``post-hoc'' and ``constructed after a conclusion has been found,'' with larger models showing less faithful reasoning. Chain-of-thought continues to work ``even with invalid steps in the reasoning chain,'' suggesting pattern matching rather than logical execution. This evidence strongly supports our geometric separation hypothesis---models access instructional pathways distinct from computational pathways when generating explanations.

\paragraph{Sparse Autoencoders and Feature-Level Analysis.}
Recent advances in mechanistic interpretability have shifted from neuron-level analysis to feature-level decomposition through Sparse Autoencoders (SAEs)~\citep{bricken2023towards, templeton2024scaling}. These methods extract interpretable features that appear more stable across models than individual neurons~\citep{wang2025towards}, with techniques like Linear Computation Graphs tracing complete computational pathways through these features~\citep{he2024automatically}. While SAEs promise to overcome the polysemanticity and path-dependency limitations of neuron-level analysis, we examine in Section~\ref{sec:interpretability} whether these features represent genuine computational mechanisms or merely more stable statistical patterns. Our analysis suggests that even sophisticated feature decomposition cannot escape the fundamental architectural constraints that create the computational split-brain syndrome.

\subsection{Compensatory Strategies and Architectural Remedies} \label{sec:compensatory-strategies}

\paragraph{Prompt-Level and Self-Scaffolding Interventions.}
The effectiveness of compensatory strategies has theoretical grounding in computational complexity. \citet{li2024chain} prove that chain-of-thought fundamentally changes the computational expressiveness of transformers: while constant-depth transformers without CoT can only solve problems in $\mathsf{TC}^0$, adding $T$ steps of CoT enables them to solve any problem solvable by Boolean circuits of size $T$. This provides the theoretical foundation for why self-scaffolding and chain-of-thought prompting partially succeed---they effectively transform parallel computation constraints into serial computation opportunities, allowing transformers to simulate $\mathsf{P/poly}$ with polynomially many intermediate steps. This theoretical result directly explains the empirical success of self-scaffolding as one of the compensatory strategies.

However, \citet{peng2024limitations} demonstrate that chain-of-thought requires an exponentially growing number of intermediate steps for complex function composition tasks, suggesting that self-scaffolding becomes ineffective when problems demand impractically long reasoning chains. This theoretical prediction aligns with empirical failures we discuss in Section~\ref{sec:compensatory-strategies}.

Beyond standard chain-of-thought~\citep{wei2023cot}, researchers have proposed metacognitive variants that embed self-evaluation into the reasoning process. AbstRaL~\citep{gao2025abstral} trains models to first generate an \emph{abstract} symbolic representation of a problem via reinforcement learning before attempting execution, substantially improving robustness under distribution shift. Self-correction frameworks such as SPOC~\citep{zhao2025spoc} train models to interleave solution generation with explicit verification steps, enabling dynamic revision when the internal verifier detects errors. This exemplifies the self-scaffolding pattern analyzed in Section~\ref{sec:compensatory-strategies}.

\paragraph{Tool Integration and Hybrid Execution.}
Tool-augmented approaches explicitly address the execution gap. Toolformer~\citep{schick2023toolformer} enables automated tool calling, while ReAct~\citep{yao2022react} combines reasoning with external action. \citet{yang2024arithmetic} provide empirical validation that this separation---LLMs for predicate extraction, external systems for computation---consistently outperforms end-to-end LLM execution across arithmetic tasks.

\paragraph{Architectural Modifications.}
To address fundamental limitations, researchers have explored hybrid architectures incorporating explicit symbolic modules. OccamLLM~\citep{dugan2024occamllm} and IGC~\citep{dietz2024igc} embed arithmetic reasoning units, while Logic-LM~\citep{pan2023logic}, NLM~\citep{dong2019neural}, and Differentiable Logic Machines~\citep{zimmer2023differentiable} implement logical operations through structured tensors. These systems represent promising directions for true architectural solutions rather than compensatory workarounds.




\subsection{Positioning and Contributions}

Concurrent work by Lin et al.~\citep{lin2025rules} observes a similar disconnect between LLMs' ability to generate symbolic text and execute reliably, framing this as a safety and alignment concern arising from ``semiotic disconnects.'' While we share this empirical observation, understanding \textit{how} these limitations emerge requires identifying the architectural factors that create them. Our work addresses this in three crucial ways.

First, while previous studies document \textit{what} failures occur across domains, we provide a principled mechanistic explanation for \textit{why} they occur through three interconnected architectural constraints: contextual averaging, computational impossibility, and instruction-execution disconnect. Second, rather than treating arithmetic and relational reasoning as separate problems, we demonstrate that identical underlying limitations create the computational split-brain syndrome across all symbolic domains. Third, our geometric analysis reveals that this disconnect operates at the representational level, with direct implications for interpretability research showing why mechanistic studies often surface training-dependent artifacts rather than universal computational principles.

This unified framework not only explains current limitations but predicts why certain compensatory strategies succeed while others fail, providing a foundation for developing more reliable reasoning systems that leverage LLMs' pattern completion strengths while addressing their execution limitations through principled architectural modifications.

\section{Structural Limits on Symbolic Computation}
\label{sec:symbolic-limits}

The computational split-brain syndrome manifests most clearly in symbolic computation, where LLMs can perfectly explain algorithms they cannot reliably execute. This systematic dissociation—drawing analogy to neurological conditions where different brain systems cannot coordinate effectively—emerges as geometric separation in representational space: models develop distinct pathways for ``knowing about'' procedures versus ``executing'' them.

This section examines the architectural constraints that prevent reliable symbolic reasoning through a causal chain of three interdependent factors. Importantly, no single constraint alone would be fatal—contextual averaging could be overcome with better training techniques, architectural limitations might be addressed through scaling, and instruction-execution disconnect could potentially be bridged through clever prompting. However, these three constraints reinforce each other systematically: contextual averaging prevents automatic domain binding (Claim 1), while architectural limitations make direct symbolic computation impossible through weight configuration alone, forcing models toward pattern storage (Claim 2), and next-token prediction treats algorithmic descriptions and execution traces as equivalent pattern-completion tasks, preventing instructional guidance from bridging this gap (Claim 3). Together, these interdependent factors reveal why LLMs function as sophisticated pattern completion engines rather than symbolic reasoners, with implications extending to relational reasoning as we explore in Section~\ref{sec:relational-reasoning}.


\subsection{Claim 1: Contextual Averaging Prevents Automatic Domain Binding}
\label{subsec:contextual-drift}

Token embeddings in LLMs encode context-weighted averages that resist automatic domain binding, making stable symbolic circuits extremely difficult to form. Unlike traditional programs that explicitly bind variables to types and domains, LLMs derive token meanings from statistical patterns across all training contexts.

\paragraph{The Failure of Mathematical Binding.}
The root of this problem lies in how LLMs learn token representations. During training, the model encounters ``9.11'' in countless different contexts—historical articles about September 11th, software version discussions, financial data, and mathematical expressions. The training objective forces the model to find a single embedding that works reasonably well across all these contexts.  

Formally, this creates a fundamental tension. The training objective optimizes embeddings through next-token prediction:
\begin{gather}
\mathcal{L}(\theta) = -\sum_t \log p_\theta(w_t \mid w_{<t}) \label{eq:total-loss} \\
e^*(w) = \arg\min_{e} \sum_{c \in C(w)} \mathbb{E}_{w' \sim p_{\text{data}}(\cdot|c, w)} \left[ -\log p_\theta(w' \mid e,\; E(c \setminus w)) \right] \label{eq:embed-loss}
\end{gather}
Equation~\ref{eq:total-loss} is the familiar next-token prediction loss, where $\theta$ denotes the model parameters, $w_t$ is the token at position $t$, $w_{<t}$ represents all tokens before position $t$, and $p_\theta(\cdot|\cdot)$ is the conditional probability distribution learned by the model. Equation~\ref{eq:embed-loss} shows how individual token embeddings are optimized: $e^*(w)$ is the optimal embedding for token $w$, $C(w)$ denotes all contexts containing token $w$, $e$ is the candidate embedding being optimized, $E(c \setminus w)$ represents embeddings of all tokens in context $c$ except $w$, and $w'$ is the next token sampled from the data distribution $p_{\text{data}}(\cdot|c, w)$. This optimization produces context-averaged embeddings that trade off among multiple usage modes, similar to distributional semantics models such as Word2Vec~\citep{mikolov2013efficient}.

In formal reasoning, we bind symbols to precise domains ($a = 9.9, a \in \mathbb{R}$), stripping away non-mathematical associations. LLMs typically struggle to perform this domain binding reliably. When encountering ``9.9'', they tend to preserve contextual associations rather than mapping to a clean mathematical representation.

To demonstrate this contextual contamination, we presented LLaMA2-7B-chat with prompts like ``9.11 is a'' and ``9.9 is a''. The completions reveal dramatically different semantic associations:
\begin{itemize}
\item For ``9.11 is a'', the model produces: ``day'', ``date'', ``remembrance'', ``tragedy''
\item For ``9.9 is a'', the model produces: ``number'', ``decimal'', ``perfect'', ``high''
\end{itemize}

This divergence shows ``9.11'' retaining historical event associations while ``9.9'' behaves as a scalar quantity. Quantitative analysis using negative log-likelihoods across different templates (DATE, VERSION, MEASURE) confirms these contextual biases. The missing operation is domain binding—establishing ``9.11'' as an element in $\mathbb{R}$ governed by mathematical laws rather than historical associations.

\paragraph{Failure of Embedding Geometry.}
Symbolically stable circuits require representation spaces that preserve consistent structural relationships. While this doesn't demand strictly Euclidean geometry, it requires systematic, predictable structure. As Equation~\ref{eq:embed-loss} predicts, contextually-averaged embeddings resist the domain binding required for stable symbolic circuits.

\begin{figure}[t]
  \centering
  \includegraphics[width=\columnwidth]{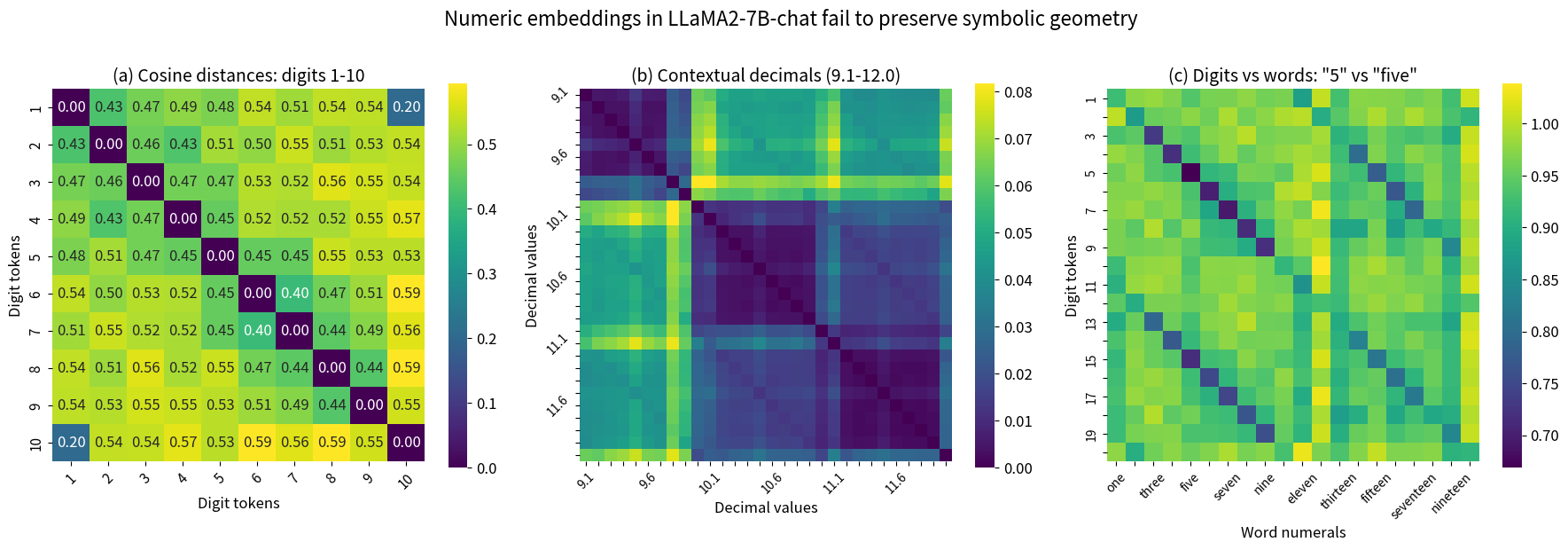}
  \caption{Numeric embeddings in LLaMA2-7B-chat fail to preserve symbolic geometry: (a) irregular cosine distances between digit tokens; (b) bifurcation at ``10.0'' due to tokenization boundaries; (c) large distances between equivalent representations like ``5'' and ``five''.}
  \label{fig:embedding-diagnostics}
\end{figure}

The embedding analysis reveals the extent of this geometric chaos (Figure~\ref{fig:embedding-diagnostics} ). What should be the mathematically ordered space of numbers instead resembles a scrambled puzzle, with qualitatively similar failures observed across Qwen and Mistral models:
\begin{itemize}
 \item \textbf{Panel (a):} Cosine distances between digit tokens ``1'' to ``10'' show irregular, asymmetric patterns. Notable violations of numerical ordering include ``10'' being closer to ``1'' (distance 0.20) than to ``9'' (distance 0.55), and ``6'' being closer to ``10'' (distance 0.59) than to adjacent digits like ``7'' (distance 0.40).
 
 \item \textbf{Panel (b):} Decimal embeddings exhibit clear clustering by first digit rather than numerical proximity. The sharp visual boundaries separate 9.x values (top-left dark block), 10.x values (middle diagonal band), and 11.x--12.0 values (bottom-right region), revealing tokenization-driven rather than mathematically-principled organization.

 \item \textbf{Panel (c):} Symbolic equivalents show systematic misalignment in embedding space. Ideally, we should observe a single diagonal of low distances where digit-word pairs (``1''/``one'', ``2''/``two'', etc.) align. Instead, the heatmap reveals fragmented diagonal patterns with significant irregularities. While some digit-word pairs show reasonable proximity (e.g., ``5'' is relatively close to ``five''), the embedding space exhibits systematic failures: a group of higher-numbered words like ``ten'', ``eleven'', and ``twelve'' show poor alignment with any digit tokens, and paradoxically, ``5'' appears closer to ``fifteen'' than many other digits are to their correct word equivalents. This demonstrates that the embedding geometry fails to consistently preserve mathematical equivalence relationships.
\end{itemize}

These geometric failures directly impede the formation of stable arithmetic circuits that could operate reliably across different input formats or numerical ranges. For LLMs to form stable, input-agnostic circuits for any algorithm, inputs must preserve the properties that algorithm expects, even when cast in high dimensions. Consider sorting—it fails if vector representations have random perturbations that violate ordering relationships. Without isometric properties in embedding space (preserving numerical order and relationships), models cannot form circuits that work consistently across all inputs. \citet{mcleish2024transformers} demonstrate that altering input geometry alone can dramatically improve simple numerical reasoning, which might appear to contradict our contextual averaging argument. However, their approach involves training models exclusively on arithmetic problems, which inadvertently avoids the fundamental contextual averaging problem faced by real-world LLMs. Once embeddings are contaminated through multi-domain training—where tokens like ``9.11'' accumulate conflicting associations across historical, versioning, and mathematical contexts—this geometric solution breaks down. The McLeish results thus support rather than refute our analysis: they show embedding geometry matters precisely because real-world training makes it unsolvable.

But suppose we could somehow solve the representation problem—imagine LLMs had perfect, mathematically-structured embeddings where ``9.9'' and ``9.11'' occupied precisely the right positions in embedding space. Would this enable reliable symbolic computation? Unfortunately, a deeper architectural constraint ensures the answer is no. Even with ideal symbolic representations, the computational mechanisms of transformer networks face fundamental limitations that force them toward pattern storage rather than algorithmic execution.

\subsection{Claim 2: Feed-Forward Networks Resort to Pattern Storage}
\label{subsec:claim2-pattern-storage}

Even if we solved the representation problem—imagine LLMs had perfect, mathematically-structured embeddings—would transformers be able to execute symbolic computation? The answer reveals a profound gap between capacity and capability. Feed-forward networks possess, through the Universal Approximation Theorem, the capacity to approximate any continuous function. Yet in practice, they systematically fail at symbolic computation, resorting instead to pattern storage. This paradox lies at the heart of the computational split-brain syndrome.

\paragraph{The Architectural Division of Labor.}
To understand this paradox, we must first examine how computation flows through the transformer architecture and identify where symbolic computation must occur.

\begin{figure}[t]
\centering
\begin{minipage}[b]{0.30\textwidth}
    \centering
    \includegraphics[width=\textwidth]{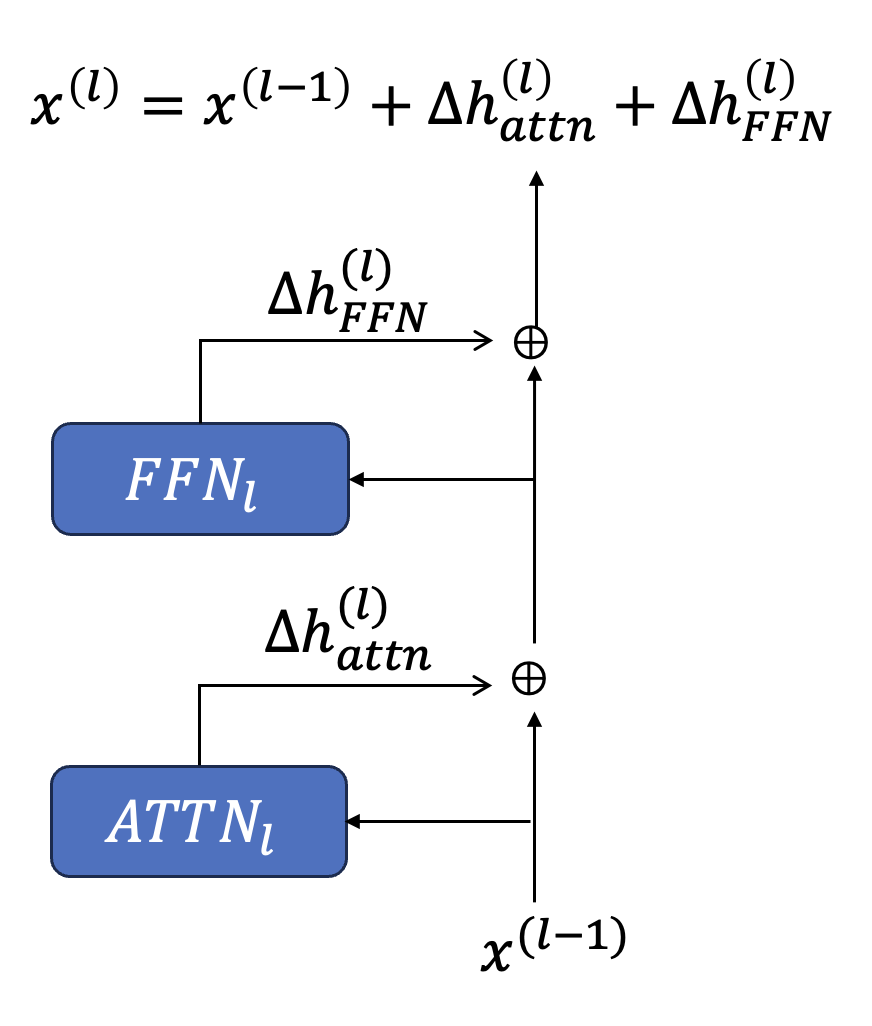}
\end{minipage}
\begin{minipage}[b]{0.42\textwidth}
    \small
    \begin{align}
    \Delta h^{(l)}_{\text{attn}} &= \text{Attention}_l(x^{(l-1)}) \\
    h^{(l)}_{\text{attn}} &= x^{(l-1)} + \Delta h^{(l)}_{\text{attn}} \\
    \Delta h^{(l)}_{\text{FFN}} &= \text{FFN}_l(h^{(l)}_{\text{attn}}) \\
    x^{(l)} &= h^{(l)}_{\text{attn}} + \Delta h^{(l)}_{\text{FFN}} \\
    &= x^{(l-1)} + \Delta h^{(l)}_{\text{attn}} + \Delta h^{(l)}_{\text{FFN}}
    \end{align}
    \vspace{0.5em}
    
    \normalsize
    Final output after $L$ layers:
    $$x^{(L)} = \text{embeddings} + \sum_{l=1}^{L} \Delta h^{(l)}_{\text{attn}} + \sum_{l=1}^{L} \Delta h^{(l)}_{\text{FFN}}$$
\end{minipage}
\caption{Computational flow through transformer layers. When processing arithmetic operations (e.g., ``43 × 78 = ?''), attention aggregates the operation context (operator and operands) while FFNs must generate the result. Due to causal masking, attention at the ``='' position can only compute weighted averages of the embeddings it sees, creating a representation that cannot contain the novel direction for the result token.}
\label{fig:transformer-flow}
\end{figure}

Figure~\ref{fig:transformer-flow} illustrates the computational pipeline of transformer layers.\footnote{We omit layer normalization and dropout from our analysis as these operations—being element-wise rescaling and masking—do not alter the fundamental computational limitations we identify. LayerNorm normalizes representations but cannot enable symbolic computation.} Each transformer layer implements two successive residual connections: attention reads from the residual stream and adds its output back, creating an intermediate state that FFN then reads, transforms, and adds to produce the next layer's representation. 

Consider a concrete example: `43 × 78 = ?'. Due to causal masking at the `=' position, attention only sees $\{43, \times, 78, =\}$ and computes weighted averages of these embeddings. This creates a rich representation encoding `multiply 43 by 78'—identifying the operation and operands—but crucially, since attention can only compute weighted combinations of its input embeddings, this representation cannot contain the novel embedding direction needed for `3354', which lies outside the space spanned by the input tokens.

Feed-forward networks then receive this attention-enriched representation and must somehow produce the correct result. Since attention cannot generate novel values, FFNs bear the entire computational burden: transforming the encoding of `multiply 43 by 78' into the embedding direction for `3354'.

\paragraph{Theoretical Capacity via Universal Approximation.}
In principle, FFNs have sufficient capacity for this task. Each FFN implements an expand-compress architecture~\citep{vaswani2017attention}:
\begin{equation}
\text{FFN}(x) = W_2 \cdot \text{ReLU}(W_1 x + b_1) + b_2
\end{equation}

By the Universal Approximation Theorem~\citep{hornik1991approximation}, such networks can approximate any continuous function $f: \mathbb{R}^d \to \mathbb{R}^d$ to arbitrary precision on compact sets. Theoretically, this includes multiplication: given sufficient width and appropriate weights, an FFN could approximate $x \times y$ within any bounded domain.

This raises a critical question: If FFNs have the capacity to approximate multiplication, why do they fail so catastrophically in practice?

\paragraph{The Training Reality: Role Specialization.}
The answer lies not in what FFNs can do, but in what gradient descent trains them to do. We now present our central theoretical result:

\begin{tcolorbox}[colback=blue!5!white,colframe=blue!75!black,title={Theorem (Role Specialization under Next-Token Prediction)}]
Under gradient descent optimization for next-token prediction loss $\mathcal{L} = -\log p_\theta(w_t | w_{<t})$, the three parameter sets necessarily specialize into distinct computational roles:
\begin{itemize}
    \item $\theta_E$ (embeddings) $\rightarrow$ Context-averaged representations
    \item $\theta_A$ (attention) $\rightarrow$ Operation context encoding  
    \item $\theta_F$ (FFNs) $\rightarrow$ Memorized pattern-to-result mapping
\end{itemize}

\textit{Proof Sketch:} The cross-entropy loss requires $x^{(L)} \cdot e_{\text{result}}$ to be maximal among all vocabulary tokens. Since attention is constrained to weighted averages of inputs, it cannot generate an embedding direction for the result token that lies outside the space spanned by the input embeddings. FFNs must therefore learn the discrete mapping from attention's operation encoding to result embeddings. While UAT guarantees this mapping can be approximated on the training set, gradient descent optimizes for memorizing specific pattern-to-token mappings rather than discovering algorithmic rules. See Appendix~\ref{app:role-specialization} for detailed analysis.
\end{tcolorbox}

\paragraph{Why Pattern Storage Wins.}
To understand why memorization emerges inevitably, consider the optimization dynamics for arithmetic like `43 × 78 = 3354'. The gradient descent must minimize:
\begin{equation}
\mathcal{L} = -\log p(\text{`3354'} | \text{`43 × 78 ='})
\end{equation}

This requires the final representation $x^{(L)}$ to have maximum dot product with the embedding of token `3354' compared to all other vocabulary tokens. The gradients flow through three components with distinct constraints:

\begin{enumerate}
    \item \textbf{Embeddings} lack the isometric properties required for arithmetic—numerical tokens cannot preserve ordering or distance relationships due to contextual averaging (Section~\ref{subsec:contextual-drift}).
    \item \textbf{Attention} can only compute weighted averages of existing embeddings, unable to generate novel token directions outside their span.
    \item \textbf{FFNs} must bridge the gap: mapping attention's encoding to the specific embedding direction that maximizes probability for `3354'.
\end{enumerate}

Since FFNs cannot implement true multiplication (piecewise linear functions cannot compute $x \times y$ exactly—see Appendix~\ref{app:ffn-proofs}), gradient descent converges to the only viable solution: memorizing a lookup table of pattern $\rightarrow$ result mappings. The UAT capacity that could theoretically approximate multiplication is instead allocated to storing these discrete mappings.

\paragraph{Experimental Validation of Hierarchical Pattern Assembly.} 
To test our theoretical framework, we analyzed layer-by-layer hidden state trajectories in LLaMA-2-7B during arithmetic computation
For prompts like ``23 × 96 ='' and ``63.7 + 3.5 ='', we measured how each layer's representation at the prediction position converges toward the final layer's output representation.

\begin{figure}[t]
  \centering
  \includegraphics[width=\columnwidth]{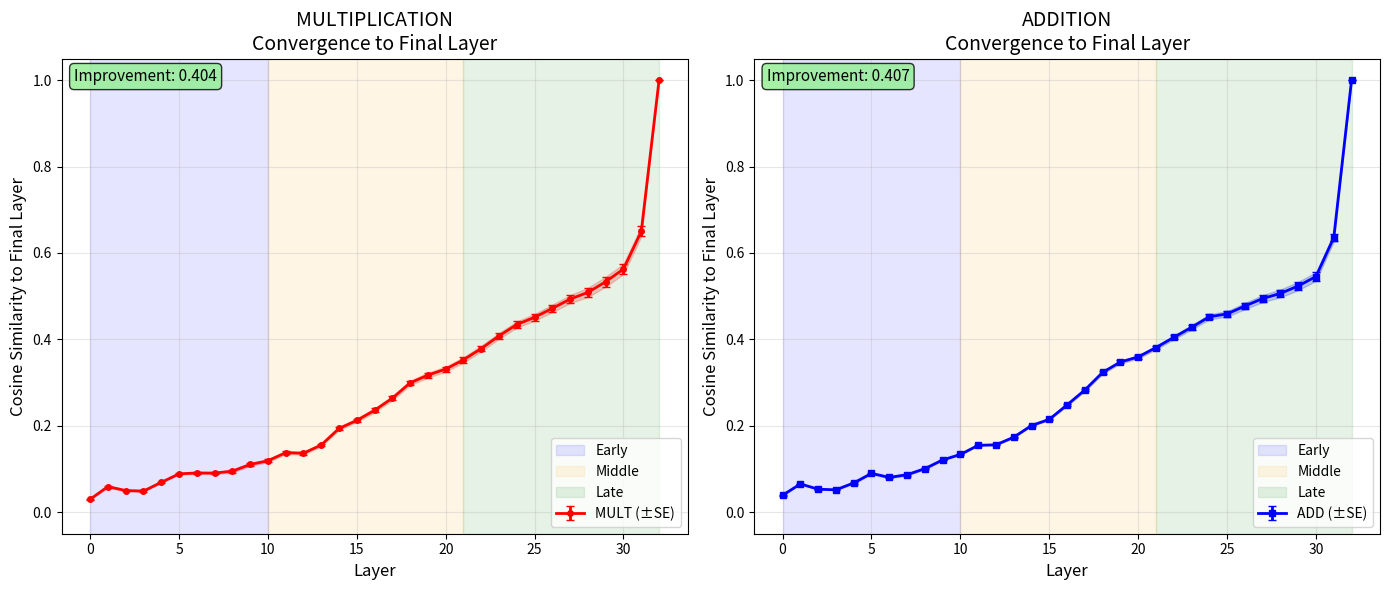}
   \caption{Experimental validation of hierarchical pattern assembly in arithmetic computation. Layer-by-layer convergence analysis for multiplication (left) and addition (right) shows progressive refinement from initial similarity of $\sim$0.07 to late-layer similarity of $\sim$0.48. Error bars represent standard error across 10 examples per operation. The consistent improvement patterns (0.404 for multiplication, 0.407 for addition) provide direct neural evidence for residual pattern fitting rather than algorithmic computation. Shaded regions indicate early (blue), middle (orange), and late (green) processing phases.}
  \label{fig:pattern_storage_validation}
\end{figure}

Figure~\ref{fig:pattern_storage_validation} demonstrates strong empirical support for our theoretical framework. Both multiplication and addition exhibit nearly identical progressive refinement patterns: representations begin with low similarity to the final output ($\sim$0.07) and gradually converge through three distinct phases. This layer-by-layer analysis confirms that since each individual FFN faces the same architectural impossibility of exact computation, the only viable learning strategy becomes hierarchical pattern decomposition, where multiple FFNs coordinate their residual contributions through learned pattern sequences rather than implementing systematic algorithms.

This pattern storage behavior aligns with recent findings by~\citet{dziri2023faith}, who demonstrate that transformers memorize computation patterns from training rather than learning generalizable algorithms, with models succeeding primarily when test patterns appeared frequently in training data.

\paragraph{The Approximation-Computation Gap.}
The fundamental distinction between what FFNs can do in theory versus practice reveals why pattern completion differs from algorithmic approximation:

\begin{itemize}
    \item \textbf{UAT Approximation}: FFNs can learn $f: K \to \mathbb{R}^d$ on compact training set $K$, successfully memorizing patterns like `23×17 $\rightarrow$ 391'.
    \item \textbf{Symbolic Computation}: True algorithms execute for any valid input, maintaining systematic behavior on novel combinations.
    \item \textbf{The Key Difference}: Robustness to distribution shift—algorithmic implementation (even imperfect) generalizes systematically, while pattern completion catastrophically fails outside memorized examples.
\end{itemize}

For novel inputs outside the training support $K$, FFNs have no algorithm to execute—only nearby memorized patterns to interpolate between. The discrete nature of token prediction exacerbates this: small errors in $x^{(L)}$ cause complete failure (wrong token), unlike continuous approximation where errors degrade gracefully. If students memorized multiplication tables then tried 3-digit problems, they would fail completely—only true algorithmic understanding enables generalization.

Many researchers, including ourselves initially, searched for generalizable arithmetic circuitry in LLMs. At inference time, such circuitry does not exist, as confirmed by studies showing LLMs rely on ``bags of heuristics'' rather than systematic algorithms~\citep{nikankin2024arithmetic,tigges2024circuits}. Models perform pattern completion whether explaining algorithms or attempting execution—they just complete different memorized patterns.

\paragraph{Complexity-Theoretic Confirmation.}
This empirical limitation aligns with theoretical impossibility results. \citet{li2024chain} prove that constant-depth transformers can only compute functions in $\mathsf{TC}^0$, while multiplication requires serial carry propagation—fundamentally outside $\mathsf{TC}^0$. Even with universal approximation capacity, architectural constraints prevent true symbolic computation in a single forward pass. The $\mathsf{TC}^0$ result establishes that no clever weight configuration can enable exact multiplication, while our analysis shows that training dynamics allocate the available approximation capacity toward pattern memorization rather than algorithmic discovery.

This analysis of pattern storage versus algorithmic computation raises a puzzling question: if LLMs cannot execute symbolic operations reliably, why do they excel at explaining algorithmic procedures? When asked ``How do you multiply two numbers?'', models provide textbook-perfect explanations of the algorithm. This suggests a potential solution: perhaps these articulated procedures could implicitly guide execution, with the model's comprehensive knowledge of algorithms compensating for its execution limitations. Unfortunately, this intuitive hope fails due to a third architectural constraint that prevents instructional knowledge from rescuing symbolic computation.

\subsection{Claim 3: Next-Token Prediction Decouples Instruction from Execution}
\label{subsec:decoupled-learning}

The next-token objective treats algorithmic descriptions and execution examples as equivalent prediction tasks, preventing the binding of procedural knowledge to computational implementation. Even if architectural mechanisms existed to enable instructional guidance, the geometric separation of instruction and execution pathways ensures that procedural knowledge cannot rescue symbolic computation.

\paragraph{The False Promise of Instructional Binding.}
Training data contains examples that might appear to support instruction-execution binding. Consider instructional sequences like:
\begin{quote}
\textit{``To multiply 23 × 17: First multiply 23 × 7 = 161, then multiply 23 × 10 = 230, finally add 161 + 230 = 391.''}
\end{quote}
Such examples seem to demonstrate algorithmic steps paired with execution traces, potentially enabling models to bind descriptions to implementations. However, next-token prediction processes these as sequential pattern completions without distinguishing instructional content from computational steps.

The expectation that exposure to such instruction-execution pairs would automatically enable binding represents a fundamental illusion about neural learning. Training models to recite algorithmic procedures does not automatically configure weights to perform those operations or enable inference-time binding between instruction and execution. The next-token objective does not provide a mechanism for such binding: it treats ``how to multiply'' and ``what is 23×17'' as independent text completion tasks.

\paragraph{Indiscriminate Processing Across Context Types.}
The training corpus contains arithmetic in diverse contexts: instructional examples with step-by-step explanations, homework solutions, calculations embedded within articles, and standalone problem-answer pairs. Next-token prediction treats all of these—instructional descriptions, execution traces, and standalone calculations—as equivalent token sequences to predict, without distinguishing their functional roles.

This creates two separate learning pathways with distinct representational signatures: models learn to recite algorithms by pattern-matching instructional texts, while learning to execute operations by retrieving stored pattern fragments from training examples. Critically, these competencies become geometrically separated in the model's latent space—instructional knowledge and execution capabilities occupy different representational regions.

\paragraph{Experimental Validation of Geometric Separation.}

\begin{figure}[t]
  \centering
  \includegraphics[width=\columnwidth]{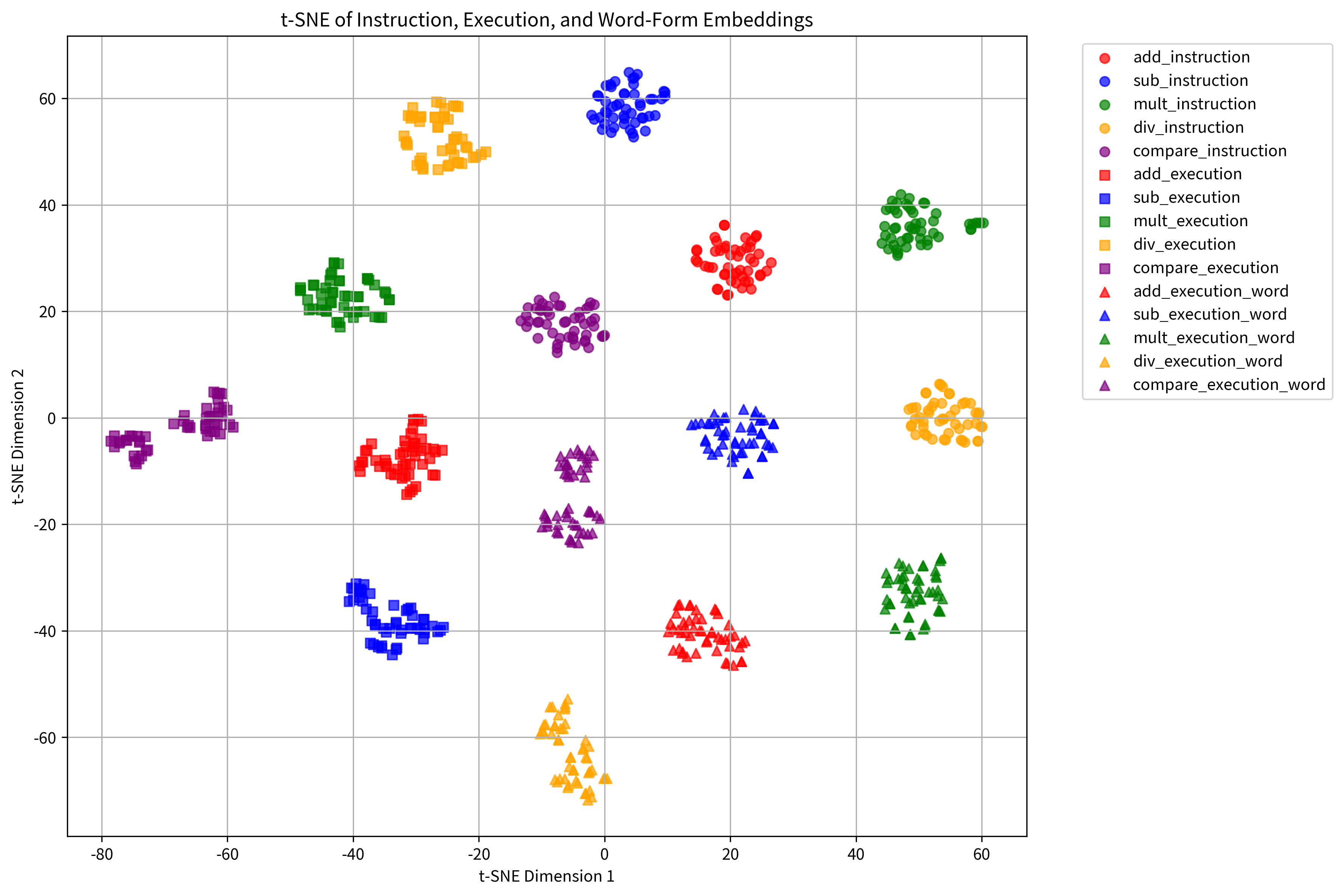}
  \caption{t-SNE projection of instruction, symbolic execution, and word-form execution embeddings from LLaMA2-7B-chat, across 50 problems each in five arithmetic operations. Instructions (circles), symbolic expressions (squares), and word-form results (triangles) occupy geometrically distinct regions, despite describing the same operation. This illustrates that LLaMA2-7B-chat stores instructional knowledge and executional know-how in separate regions of its latent space.}
  \label{fig:tsne-geometry-separation}
\end{figure}

If our hypothesis is correct, we should be able to directly observe this separation in the model's representational space. When a model encounters ``To multiply 56 and 76, first break one number into parts...'' versus ``56 × 76 = 4256,'' are these processed by the same neural pathways or completely different ones?  

To test this, we constructed a dataset of 250 arithmetic problems spanning five operations, each expressed in three distinct forms:
\begin{itemize}
  \item an instructional sentence that explains the procedure (e.g., ``To multiply 56 and 76, first break one number into parts...''), 
  \item a symbolic execution form (e.g., ``56 $\times$ 76 = 4256''), and 
  \item a word-form result (e.g., ``A worker makes 56 parts per day for 76 days, totaling 4256 parts'').
\end{itemize}

We embedded each sentence using LLaMA2-7B-chat and projected the resulting vectors with t-SNE. As shown in Figure~\ref{fig:tsne-geometry-separation}, the model organizes these representations into clearly separated geometric clusters: instruction embeddings are well-separated from execution forms, and word-form results cluster more closely with symbolic executions than with instructions. This visualizes the structural dissociation between verbalized procedures and learned execution templates in the model's latent space.

\begin{figure}[!htb]
  \centering
  \includegraphics[width=\columnwidth]{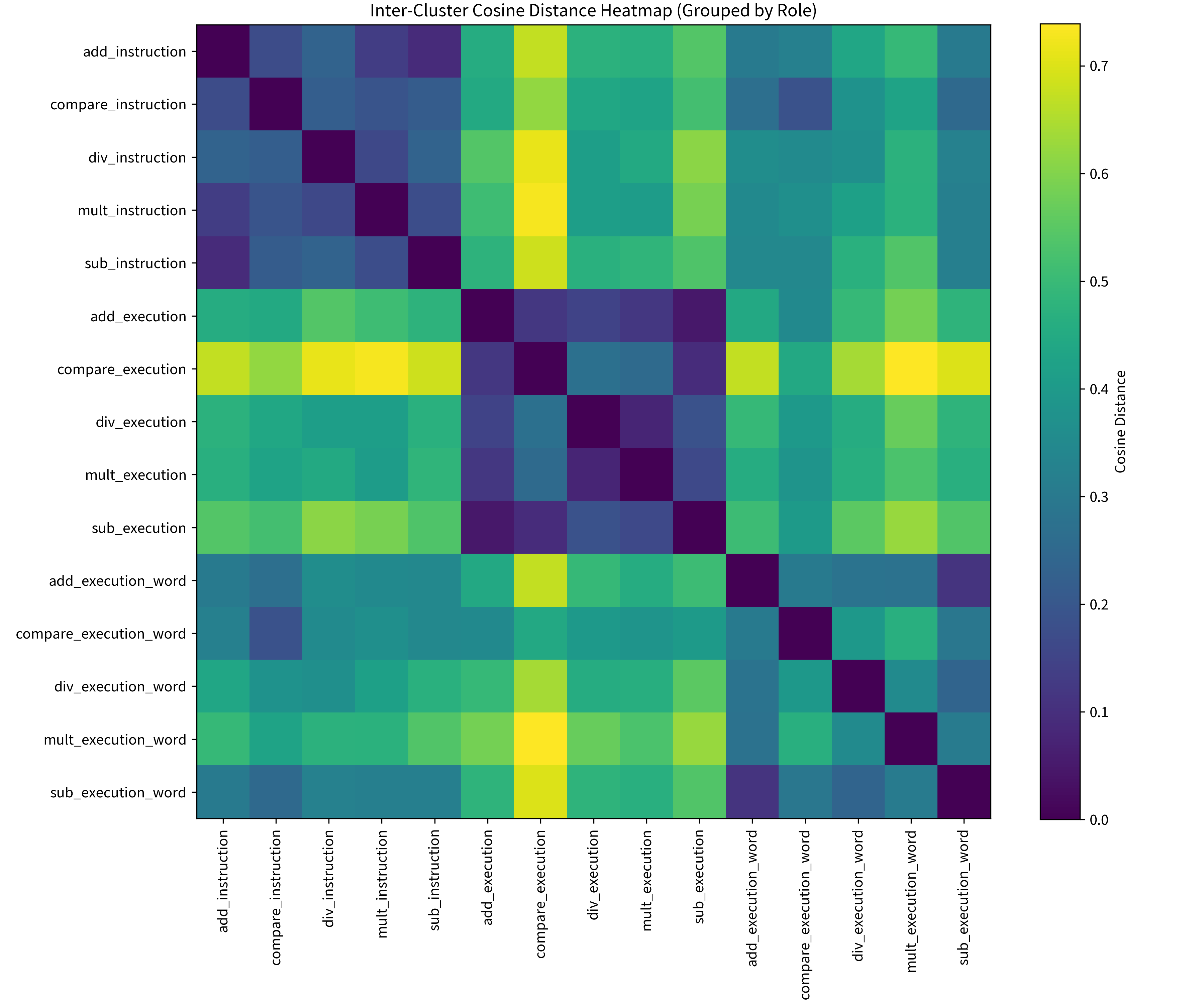}
  \caption{
    Cosine distances between cluster centroids for each operation-role pair, grouped by role: instructional texts (top block), execution strings (middle), and worded executions (bottom). Each centroid represents the average embedding of 50 samples per category, computed using LLaMA2-7B-chat. Clear geometric separation is observed between instructional and execution-related forms, with execution and worded execution clusters being closer to each other than either is to instruction. This provides quantitative support for the hypothesis that instruction-following and execution are learned as distinct representational pathways within the model's embedding space.
  }
  \label{fig:centroid-distance-heatmap}
\end{figure}

We computed the mean embedding (centroid) for each (operation, role) cluster based on 50 examples per category. We then calculated pairwise cosine distances between these centroids and visualized the result as a heatmap (Figure~\ref{fig:centroid-distance-heatmap}). The cluster labels are grouped first by role (instruction, execution, worded execution) and then by operation. The plot reveals a clear block structure: instructional forms form one cluster with tight internal cohesion but are distant from both executional forms. In contrast, execution and worded execution are notably closer to one another, reflecting their shared emphasis on output form rather than procedural structure.

\paragraph{Inference-Time Dissociation.}
At inference time, different prompts steer the model toward geometrically distinct subspaces in its representational space. When prompted for algorithmic explanation (e.g., ``How do you multiply two numbers?''), the model gravitates toward the instructional subspace, retrieving pedagogical patterns from its training distribution. When prompted for calculation (e.g., ``What is 56 × 76?''), the model seeks out the execution subspace, accessing stored arithmetic patterns. 

These operate as independent systems developed through separate pattern-matching processes during training, explaining why models can fluently articulate multiplication procedures while failing to execute them reliably. The geometric separation we observed in Figure~\ref{fig:centroid-distance-heatmap} manifests functionally as this prompt-dependent subspace selection, where the model's response pathway is determined by which representational region the prompt activates rather than any principled binding between instruction and execution.

With all three architectural constraints now established—unstable symbolic representations, computational impossibility of direct symbolic operations, and geometric separation of instruction from execution—we can see how they interact to create the computational split-brain syndrome. No single constraint alone would be fatal, but together they systematically prevent LLMs from bridging the gap between comprehension and competence.




\subsection{Real-World Manifestation: Structured Data Operations}
\label{subsec:structured-data}

The architectural constraints we identify manifest concretely in real-world applications where organizations attempt to use LLMs for data analysis tasks. Consider a seemingly simple query: finding the maximum sales value across branches from a CSV table embedded in a prompt. This operation, trivial for any database system, requires:
\begin{itemize}
    \item Sequential iteration through records
    \item Maintaining comparison state across rows
    \item Tracking both the maximum value and its associated entity
    \item Consistent numerical comparison operations
\end{itemize}

Recent empirical work by \citet{wolff2025tabular} provides direct evidence of these failures. Their evaluation of LLMs on analytical aggregations---including operations as fundamental as computing averages, sums, and maximum values---reveals ``significant deficits in tabular reasoning performance.'' These multi-step algorithms exemplify precisely the symbolic operations our three architectural constraints predict to be impossible through pattern completion alone.

The failure is systematic: MAX requires maintaining comparison state across iterations, AVG demands accumulating sums while counting elements, and even COUNT necessitates consistent increment operations. Each violates our identified constraints: contextual averaging (Claim 1) prevents stable numerical comparison, FFNs cannot implement exact iterative operations (Claim 2), and the instruction-execution disconnect (Claim 3) means that even when models can describe these algorithms perfectly, they cannot execute them reliably.

This has immediate practical implications. This has immediate practical implications. The inevitable failures are not due to inadequate training on tabular data or poor prompt engineering—they reflect fundamental architectural impossibilities that practitioners do not expect. As \citet{wolff2025tabular} demonstrate, performance degrades further with realistic data characteristics like missing values and duplicates, confirming these are architectural rather than training limitations.

\subsection{The Broader Pattern: From Architecture to Cognition}
\label{subsec:broader-pattern}

With all three escape routes blocked—unstable symbolic representations, architectural impossibility of direct computation, and geometric separation of instructional knowledge from execution—LLMs inevitably resort to sophisticated pattern storage and retrieval systems. This explains the computational split-brain syndrome: models become excellent tutors who can articulate multiplication procedures (through instructional pattern-matching from Claim 3) while executing them through brittle hierarchical pattern retrieval rather than principled computation.

\paragraph{Contrast with Symbolic Systems.} 
Traditional symbolic computation systems avoid these limitations through explicit design choices that transformer architectures lack:
\begin{itemize}
\item \textbf{Type Systems}: Variables are bound to specific domains (e.g., $a \in \mathbb{R}$) with operations defined over those types, preventing contextual contamination.
\item \textbf{Algorithmic Implementation}: Operations like multiplication are implemented as explicit procedures rather than learned approximations.
\item \textbf{Compositional Binding}: Complex expressions maintain consistent variable bindings through syntactic structure.
\end{itemize}

\paragraph{Systematic vs. Pattern-Based Approximation.}
Even when traditional systems use approximation, as in floating-point arithmetic, they maintain systematic behavior with guaranteed error bounds (IEEE 754). Each operation follows precise rules with predictable maximum errors. LLMs, by contrast, exhibit unsystematic pattern-based failures: correctly computing most arithmetic while failing unpredictably on specific cases, with no principled error bounds. For queries involving arithmetic operations, LLMs will therefore always encounter out-of-distribution problems if left to their own devices. This distinction highlights the fundamental difference between engineered approximation algorithms and emergent pattern retrieval.

\paragraph{Path-Dependent Fragmentation and Interpretability Challenges.}
The residual fitting process is highly sensitive to training dynamics across diverse domains, suggesting why mechanistic interpretability findings often lack generalizability. Since arithmetic represents just one of many competing objectives, the order and context of examples during training significantly impact which pattern clusters emerge and where they are encoded. Training data ordering, scheduling decisions, and model architecture jointly determine which statistical regularities crystallize into seemingly coherent ``circuits.''

Recent detailed mechanistic analysis confirms this training-dependency hypothesis. \citet{nikankin2024arithmetic} provide valuable evidence that arithmetic computation relies on distributed ``bags of heuristics'' spread across multiple layers, rather than implementing dedicated algorithmic circuits for basic operations ($+$, $-$, $\times$, $\div$). Their neuron-level analysis reveals that no coherent computational circuits exist for these fundamental operations. Importantly, they find that models within the same training lineage (Llama3-8B vs Llama3-70B) share more similar heuristic patterns than models from different training backgrounds (Pythia-6.9B, GPT-J), yet even within this shared lineage, the models differ in the degree and sophistication of these patterns. This training-dependency gradient---where shared training yields similar but not identical mechanisms---confirms that apparent ``arithmetic mechanisms'' reflect training-specific statistical regularities rather than universal computational principles.

\paragraph{Reversing Dennett: Comprehension Without Competence.}
This pattern inverts what philosopher Daniel Dennett observed in natural systems, where competence typically precedes comprehension~\citep{dennett2017bacteria}. Simple organisms demonstrate complex behaviors before developing explicit understanding. LLMs exhibit the reverse: sophisticated explanatory capabilities coupled with unreliable execution—comprehension without competence.

This reversal reveals LLMs as fundamentally different from both biological intelligence and traditional computational systems. They excel at linguistic pattern completion while generally lacking the grounded symbolic manipulation capabilities that such fluent descriptions would suggest.

These representational, computational, and learning constraints reveal why LLMs function as sophisticated pattern completion engines rather than symbolic reasoners. The computational split-brain syndrome emerges as a fundamental property of transformer architectures optimized for pattern completion rather than symbolic manipulation.

\paragraph{Testable Predictions.}
Our framework makes specific predictions that can be empirically validated:
\begin{enumerate}
\item Models trained exclusively on arithmetic should show different embedding geometry 
   than general-purpose models, with more isometric numerical relationships
\item Pattern assembly signatures (gradual layer-by-layer convergence) should appear 
   across all transformer model families
\item Instruction-execution separation should persist across architectures, including 
   instruction-tuned and reasoning models
\item Novel format generalization should consistently fail while familiar format 
   variations succeed (the key discriminator between pattern completion and 
   algorithmic approximation)
\end{enumerate}
These predictions provide a roadmap for future quantitative validation of our claims.

\section{From Arithmetic to Relational Reasoning}
\label{sec:relational-reasoning}

If our architectural analysis is correct, the computational split-brain syndrome should not be limited to arithmetic. The same three constraints—contextual averaging, architectural impossibility of exact computation, and instruction-execution disconnect—should create identical failure patterns wherever systematic symbolic manipulation is required.

Relational reasoning provides the perfect test case. Like arithmetic, it demands automatic binding from natural language to symbolic structure, followed by systematic execution of transformations. Just as ``9.11'' must be bound as a numerical quantity rather than a historical date, ``Alice'' must be bound to appropriate relational roles before applying logical transformations like $\forall x, y$ [Parent$(x, y) \rightarrow$ Child$(y, x)$].

\begin{table}[htbp]
\centering
\caption{Two-Phase Computational Requirements. Both arithmetic and relational reasoning require automatic binding from natural language followed by systematic execution of transformations.}
\label{tab:two_phase_requirements}
\small
\begin{tabular}{|p{0.20\textwidth}|p{0.36\textwidth}|p{0.36\textwidth}|}
\hline
\textbf{Phase} & \textbf{Arithmetic Operations} & \textbf{Relational Reasoning} \\
\hline
\textbf{Input Binding} & Automatically infer and bind: ``9.11'' $\rightarrow$ numerical value, ``9.9'' $\rightarrow$ numerical value & Automatically infer and bind: ``Alice'' $\rightarrow$ PERSON, family relationships from syntax \\
\hline
\textbf{Computational Execution} & Execute bound operations: $9.9 > 9.11$ & Execute inference rules: Parent$(X,Y) \rightarrow$ Child$(Y,X)$ \\
\hline
\textbf{Consistency} & Maintain across multi-step procedures: $a > b \land b > c \rightarrow a > c$ & Maintain variable bindings across multi-step inference: Parent$(A,B) \land$ Parent$(B,C) \rightarrow$ Grandparent$(A,C)$ \\
\hline
\end{tabular}
\end{table}

Table~\ref{tab:two_phase_requirements} illustrates these parallel computational demands across domains. Both require the same fundamental operations: automatic symbolic binding from natural language, reliable rule execution, and consistency maintenance across multi-step reasoning chains. The same three architectural constraints predict systematic failures in relational reasoning: contextual averaging prevents clean symbolic binding, instruction-execution disconnect separates rule articulation from application, and pattern storage leads to memorized templates rather than systematic variable manipulation.

This theoretical extension predicts LLMs will exhibit identical split-brain syndrome across domains—fluent explanation of logical principles coupled with unreliable execution. The evidence strongly supports this prediction.

\subsection{The Reversal Curse and Alice Problem}
\label{subsec:reversal_alice}

Consider two cases where LLMs fail at drawing basic conclusions that appear deceptively simple. The Reversal Curse~\citep{berglund2023reversalcurse} tests models on their ability to perform bidirectional inference: models trained only on statements like ``Tom Cruise's mother is Mary Lee Pfeiffer'' are tested on the reverse direction ``Who is Mary Lee Pfeiffer's son?''---a symmetric transformation humans perform unconsciously. The Alice problem~\citep{nezhurina2023alice} presents a different challenge: models given ``Alice has 4 sisters and 1 brother'' must answer ``How many sisters does Alice's brother have?'' This requires rebinding Alice from her own perspective to her brother's perspective while maintaining compositional consistency across family relationships.

\paragraph{Empirical Failures} The magnitude of these failures is striking. Even after fine-tuning on fictional statements like ``Uriah Hawthorne is the composer of Abyssal Melodies,'' models cannot answer ``Who composed Abyssal Melodies?'' revealing the same automatic binding failures we identified in arithmetic domains. Experiments show that models fine-tuned on 1,000 fictional ``A is B'' statements achieved near-perfect accuracy on forward retrieval but only 7\% accuracy on reverse queries~\citep{berglund2023reversalcurse}. This failure persists in real-world scenarios: when tested on celebrity facts, GPT-4 correctly answers forward questions like ``Who is Tom Cruise's mother?'' 79\% of the time, compared to only 33\% for reverse questions like ``Who is Mary Lee Pfeiffer's son?''~\citep{berglund2023reversalcurse}. The model's log-probability for correct answers in the reverse direction shows no improvement over random baseline, indicating systematic rather than accidental failure.

The Alice problem~\citep{nezhurina2023alice} presents equally dramatic breakdowns. Given the seemingly simple prompt ``Alice has 4 sisters and 1 brother'' and asked ``How many sisters does Alice's brother have?'', state-of-the-art models including GPT-4, GPT-4o, and Claude 3 Opus show ``strong collapse of reasoning'' across most tested models~\citep{nezhurina2023alice}. Despite the straightforward answer (5: Alice plus her 4 sisters), comprehensive testing across multiple prompt variations and at least 30 trials per model revealed frequent failures and extreme performance fluctuations on trivial problem variations. When confronted with more complex family structures (AIW+), performance collapsed to ``close to 0'' even for the most capable models~\citep{nezhurina2023alice}. Notably, models exhibit strong overconfidence in wrong solutions while providing ``confabulation-like'' explanations that sound plausible but demonstrate fundamental failures in compositional reasoning and variable binding.

\paragraph{Computational Split-Brain in Relational Reasoning} These failures exemplify computational split-brain syndrome in relational reasoning, directly predicted by our architectural analysis in Section~\ref{sec:symbolic-limits}. Models demonstrably know both generic rules (``If X is Y's mother and Y is male, then Y is X's son'') and specific facts (``Tom Cruise's mother is Mary Lee Pfeiffer''), yet systematically fail to execute the algorithmic steps required for inference. This reflects the same instruction-execution disconnect identified in Section~\ref{subsec:decoupled-learning}, where procedural knowledge and computational implementation occupy geometrically separated representational spaces.

Consider the computational requirements for the reversal curse: given ``Tom Cruise's mother is Mary Lee Pfeiffer'' and asked ``Who is Mary Lee Pfeiffer's son?'', models must automatically infer that the tokens represent PERSON entities, bind the implicit MOTHER\_OF relationship from surface syntax, recognize the query seeks the inverse relationship, apply the symmetric transformation Mother$(x,y) \rightarrow$ Son$(y,x)$ in reverse, bind variables $x=$Mary Lee, $y=$Tom, and conclude Tom is Mary Lee's son. Similarly, the Alice problem requires binding family structure from ``has 4 sisters and 1 brother,'' then rebinding Alice from SISTER role to SIBLING role to execute the perspective-shift computation.

Both tasks demand the symbolic manipulation capabilities that our architectural analysis shows current LLMs lack: contextual averaging prevents the clean symbolic binding required for role assignment, instruction-execution disconnect separates relational rule articulation from application, and pattern storage leads to memorized relationship templates rather than systematic variable manipulation.

Instead, models resort to directional pattern memorization. The reversal curse occurs because ``A is B'' patterns vastly outnumber ``B is A'' patterns in training corpora, causing asymmetric learning of fundamentally symmetric relationships. The same architectural constraints that prevent stable arithmetic circuits also undermine the variable binding and systematic transformation required for relational reasoning, creating the identical split-brain syndrome across domains—fluent explanation of logical principles coupled with unreliable execution.

We note that the reversal curse cannot be addressed through training data frequency balancing—simply ensuring equal exposure to ``A is B'' and ``B is A'' patterns. The issue is architectural: models learn separate representational pathways for each syntactic form rather than understanding the symmetric logical relationship. Even with perfect frequency balance, the system would still develop distinct pattern-matching rules instead of unified bidirectional binding. Solutions like LoCo-LM~\citep{calanzone2025logically} succeed precisely because they bypass this limitation through explicit logical constraint enforcement in the training objective, essentially forcing systematic truth table coverage rather than relying on natural pattern completion tendencies.

\subsection{Rule Applications, Logical Inconsistencies and Operator Failures}
\label{subsec:logical-failures}

In contrast to the implicit complexity of the Reversal Curse and Alice problem, logical inconsistency tasks present LLMs with explicit formal structure that should simplify reasoning. These problems provide clear premises, stated rules, and well-defined logical relationships---yet LLMs continue to fail systematically. This failure despite explicit scaffolding reveals that the architectural constraints operate independently of how logical structure is presented.

Large language models exhibit a range of logical inconsistencies, as identified in LoCo-LM \citep{calanzone2025logically} and summarized by \citet{cheng2025empowering}:

\begin{itemize}
\item \textbf{Negation Inconsistency:} The model affirms both ``X is an organism'' and ``X is not an organism.''
\item \textbf{Implication Failure:} Given ``All birds are animals'' and ``An albatross is a bird,'' the model fails to infer ``An albatross is an animal.''
\item \textbf{Reverse Implication Failure:} From ``If made of metal, then conducts electricity'' and ``X does not conduct electricity,'' the model fails to infer ``X is not made of metal.''
\item \textbf{Deductive Chain Breakdown:} Given ``Nails are made of iron,'' ``Iron is a metal,'' and ``Metals conduct electricity,'' the model fails to infer ``Nails conduct electricity.''
\end{itemize}

These failures persist even when facts and rules are fully spelled out (e.g., ``All birds can fly'' and ``Tweety is a bird'' $\rightarrow$ ``Tweety can fly'')---a setup that, unlike the Reversal Curse or Alice problem, explicitly states the binding relationships that should enable deduction if the architecture supported it.

\citet{vashishtha2025teaching} demonstrates success on causal reasoning tasks that test rule applications without variable binding. Pattern matching works when models are trained from scratch on clean synthetic data, with a 67M-parameter transformer outperforming billion-parameter LLMs on causal axioms. This confirms that direct rule application falls within transformers' capabilities when training conditions isolate logical operations from the contextual contamination of multi-domain pretraining.

\paragraph{Systematic Evaluation of Relational Reasoning} The comprehensive study by~\citet{li2024llms} provides decisive evidence of LLMs' architectural limitations in relational reasoning. They systematically evaluated state-of-the-art LLMs on inductive logic programming (ILP) tasks across both natural language and truth-value matrix formats, comparing performance against Differentiable Logic Machines (DLM)~\citep{zimmer2023differentiable}---specialized neural program induction models designed explicitly for logical reasoning.

The results are striking: despite being 100,000 times larger than DLM models, LLMs performed dramatically worse across all logical reasoning tasks. For family tree reasoning tasks requiring multi-step inference:

\begin{itemize}
\item \textbf{Simple relations} (HasFather): LLMs achieved 47-100\% accuracy vs. DLM's perfect 100\%
\item \textbf{Complex relations} (IsUncle): LLMs dropped to 0-49\% accuracy vs. DLM's 85\%
\item \textbf{Hierarchical relations} (IsMGUncle): LLMs achieved only 0-48\% vs. DLM's 55\%
\end{itemize}

Critically, these failures occurred across \textit{both} natural language prompting and truth-value matrix representations. When given explicit logical structure through truth-value matrices---the same format used by specialized logical reasoning systems---LLMs still failed systematically. This definitively rules out training data scarcity or prompt engineering as explanations.

\paragraph{Hallucination in Logical Reasoning} The study revealed systematic hallucination patterns that illuminate the underlying architectural problems. LLMs generated contradictory reasoning processes, such as claiming ``P8 is the father of P3 and also the mother of P3'' when given explicit family relationships. These aren't random errors but systematic pattern completion mistakes where models correctly identify surface facts but fail at the transformations required for compositional reasoning.




\subsection{The Pattern Across Relational Reasoning}
\label{subsec:relational-pattern}

Our analysis reveals a convergent pattern across complexity levels: LLMs fail systematically regardless of how relational reasoning tasks are presented. Problems that appear simple (Reversal Curse, Alice problem) actually require sophisticated computational transformations that LLMs cannot execute. Problems that appear complex but provide explicit logical structure (truth-value matrices, clear logical relationships) should be easier to solve, yet LLMs still fail systematically. 
Table~\ref{tab:relational_failures} summarizes the magnitude of these systematic failures (data from \citep{berglund2023reversalcurse,nezhurina2023alice,li2024llms}), revealing consistently poor performance despite the apparent simplicity of many tasks.

\begin{table}[h]
\centering
\caption{Systematic Failures in Relational Reasoning}
\label{tab:relational_failures}
\begin{tabular}{lcc}
\hline
Task Domain & LLM Performance & Comparison \\
\hline
Reversal Curse (forward) & 79\% & -- \\
Reversal Curse (reverse) & 7\% & Random baseline \\
Alice Problem (complex) & $\sim$0\% & -- \\
Family Relations (IsUncle) & 0-49\% & DLM: 85\% \\
Family Relations (IsMGUncle) & 0-48\% & DLM: 55\% \\
\hline
\end{tabular}
\end{table}

This convergence is theoretically significant. If architectural limitations were merely about training data scarcity or prompt engineering, we would expect LLMs to succeed when given explicit logical structure. Instead, failure persists regardless of presentation format, confirming that the constraints operate at the architectural level—the same three limitations that prevent stable arithmetic circuits also undermine relational reasoning across all contexts.

\paragraph{Lifted Reasoning as the Missing Capability} The consistent failure across both implicit and explicit reasoning tasks points to a fundamental missing capability: \textit{lifted reasoning}. What these tasks demand---and what LLMs fundamentally lack---is the ability to apply general rules over arbitrary entities rather than memorizing instance-specific patterns. Lifted reasoning underlies human-like generalization in logic, mathematics, and programming. Unlike pattern completion, which operates over specific instances, lifted reasoning requires variable abstraction (binding entities to abstract roles that can be systematically manipulated), rule generalization (applying logical transformations that work across arbitrary entity instantiations), and compositional consistency (maintaining variable bindings across multi-step inference chains).

Current transformer architectures lack the representational and computational mechanisms necessary for these operations. Models specifically designed for these capabilities, like Neural Logic Machines (NLM)~\citep{dong2019neural} and Differentiable Logic Machines (DLM)~\citep{zimmer2023differentiable}, achieve lifted reasoning through explicit tensor representations for predicates and specialized circuits for logical operations. These architectures explicitly address the binding problem by representing relationships as tensors and implementing differentiable operations that maintain variable consistency---architectural innovations absent from standard transformers. 

The performance gap persists even when LLMs are given identical inputs to specialized models: despite being 100,000 times larger than DLM models, LLMs performed dramatically worse across all logical reasoning tasks~\citep{li2024llms}, demonstrating that architectural constraints, not data availability, prevent reliable logical reasoning. This provides definitive evidence that the failures reflect fundamental limitations in how transformers process symbolic relationships, confirming our architectural analysis from Section~\ref{sec:symbolic-limits}.

This lifted reasoning gap explains why LLMs exhibit sophisticated pattern completion capabilities while systematically failing at the variable binding and rule application required for reliable relational reasoning. The computational split-brain syndrome manifests regardless of how logical structure is presented, confirming that the constraints operate at the architectural rather than training or prompting level. Understanding this fundamental limitation clarifies both what current LLMs can achieve as pattern completion engines and what alternative approaches might be needed to transcend these constraints.

\section{Compensatory Strategies and Their Execution Gap}
\label{sec:compensatory-strategies}

The computational split-brain syndrome reveals a fundamental paradox: LLMs can articulate principles they cannot reliably execute. However, this very comprehension capability suggests compensatory strategies. If models have memorized algorithmic procedures well, they can leverage these memorized patterns to unroll step-by-step solutions (self-scaffolding). If they can recognize problem types reliably, they can delegate to specialized external systems (tool delegation). Finally, rather than working around execution limitations, we can address them architecturally by integrating dedicated symbolic modules (hybrid architectures); essentially \textit{an} internal tool-calling for critical operations.

Each approach leverages LLMs' strengths while addressing execution limitations differently. Yet as we demonstrate, all three strategies converge on the same fundamental requirement: coordinating these approaches demands reliable metacognitive capabilities that current architectures lack (see Section~\ref{subsec:metacognition}).

\subsection{Self-Scaffolding: Leveraging Comprehension for Step-by-Step Execution}
\label{sec:self-scaffolding}

Self-scaffolding exploits LLMs' demonstrated ability to articulate memorized algorithmic procedures by having them generate explicit step-by-step decompositions, then attempt to execute each step themselves. This approach converts implicit reasoning into structured text patterns that leverage models' pattern completion strengths.

\paragraph{Examples Across Domains.} A self-scaffolding approach to the problems of multi-step arithmetic, reversal curse, and Alice problem would generate:
\begin{quote}
``To multiply $742 \times 89$: First, multiply $742 \times 9 = 6678$. Then multiply $742 \times 80 = 59360$. Finally, add $6678 + 59360 = 66038$.''

``Since Tom Cruise's mother is Mary Lee Pfeiffer, and the relationship `mother of' is symmetric to `son of', Mary Lee Pfeiffer's son must be Tom Cruise.'' 

``Alice has 4 sisters and 1 brother. From her brother's perspective, he has Alice plus her 4 sisters, making 5 sisters total.''
\end{quote}

The Illusion of Thinking study~\citep{shojaee2025illusion} is a recent example, demonstrating this pattern across planning puzzles, where Large Reasoning Models (LRMs) including Claude 3.7 Sonnet, DeepSeek-R1, and OpenAI's o3-mini generate detailed algorithmic traces for problems like Tower of Hanoi while attempting execution.

\subsection{Tool Delegation: Bypassing Model Execution Entirely}
\label{sec:tool-delegation}

Tool delegation represents a fundamentally different strategy: rather than improving the model's execution capabilities, it bypasses them entirely by delegating computational tasks to specialized external systems. This approach has been systematized in frameworks like ReAct~\citep{yao2022react}, Toolformer~\citep{schick2023toolformer}, and various agent architectures that enable LLMs to interact with external APIs and computational tools.

\paragraph{Direct Delegation Examples.} For arithmetic, instead of attempting step-by-step execution, the model simply recognizes the problem type and delegates: 
\begin{itemize}
    \item ``This is a multiplication problem'' $\rightarrow$ \texttt{calculator.multiply(742, 89)}. 
    \item For database queries: ``Find maximum sales'' $\rightarrow$ \texttt{SQL: SELECT MAX(sales) FROM data}. 
    \item For logical reasoning: ``Check satisfiability'' $\rightarrow$ \texttt{SMT\_solver.check(constraints)}.
\end{itemize}

The necessity of such delegation becomes clear when we consider the structured data operations analyzed in Section~\ref{subsec:structured-data}. Operations such as finding maximum values or computing averages require iterative state tracking that LLMs cannot perform reliably, yet these same models excel at recognizing when to invoke \texttt{SQL} or \texttt{pandas} functions that implement these algorithms correctly. This pattern extends beyond data operations to other domains requiring algorithmic execution.

The comment on the Illusion of Thinking paper~\citep{lawsen2025illusion-comment} provides a particularly striking example of tool delegation's effectiveness. When Tower of Hanoi problems were reformulated as: ``Solve Tower of Hanoi with 15 disks. Output a Lua function that prints the solution when called,'' models achieved high accuracy across tested systems (Claude-3.7-Sonnet, Claude Opus 4, OpenAI o3, Google Gemini 2.5), completing in under 5,000 tokens. This dramatic improvement occurred because models generated \emph{algorithmic code} rather than attempting to enumerate moves themselves. Similarly, \citet{cheng2024inductive} demonstrate that separating symbolic code generation from symbolic code execution improves deductive reasoning.



\subsection{Hybrid Architectures: Specialized Modules for Symbolic Operations}
\label{sec:hybrid-architectures}

Hybrid architectures represent a more fundamental response to the computational split-brain syndrome: addressing the limitations architecturally rather than through workarounds. These approaches integrate specialized computational modules within LLM systems to handle symbolic operations that current transformer architectures cannot reliably implement. Essentially, the idea is to mend the comprehension-competence gap internal to the model.

\paragraph{Domain-Specific Integration.} Several promising approaches have emerged across different symbolic domains. For arithmetic computation, OccamLLM~\citep{dugan2024occamllm} and Internal General Computations (IGC)~\citep{dietz2024igc} incorporate specialized circuits that directly implement arithmetic operations, achieving perfect accuracy while avoiding pattern-based approximations. For logical reasoning, Logic-LM~\citep{pan2023logic} integrates LLMs with symbolic solvers through translation-execution pipelines that maintain logical consistency.

Particularly encouraging are approaches like Differentiable Logic Machines (DLM)~\citep{zimmer2023differentiable}, which represent predicates as tensors and implement logic operations as differentiable functions. Unlike approaches that use neuro-symbolic losses to improve consistency in existing architectures, DLMs provide fundamental architectural support for rule learning and symbolic generalization, directly addressing the induction barriers we identified.

\paragraph{Functional Specialization Architectures.} Such integration, if done well, would combine pattern-matching flexibility with principled symbolic operations. Rather than emulating symbolic computation through pattern completion, such systems would implement dedicated mechanisms for variable binding, rule application, and systematic inference.


Current Mixture-of-Experts (MoE) architectures~\citep{shazeer2017outrageously} optimize primarily for computational efficiency—routing tokens to different parameter sets to increase model capacity—rather than functional specialization based on cognitive capabilities. A neuroscience-inspired approach could allocate specific experts to handle symbolic operations, relational reasoning, pattern completion, and other specialized functions. Just as the human brain has dedicated regions for executive reasoning (prefrontal cortex) and memory (temporal lobe)~\citep{kandel2021principles}, functionally partitioned MoE could develop experts that excel at exactly the symbolic binding and manipulation operations that current general-purpose feed-forward networks struggle to implement.

\subsection{Coordination and The Metacognitive Bottleneck}
\label{subsec:metacognition}

\paragraph{Complementary Strengths and Shared Limitations.} 
The three compensatory strategies form a complementary toolkit for addressing the computational split-brain syndrome, each leveraging LLMs' pattern completion strengths while circumventing execution limitations in different ways. Self-scaffolding requires no external components or architectural modifications, working entirely within existing capabilities by converting implicit reasoning into explicit textual decompositions. It excels in medium-complexity scenarios where individual steps fall within reliable pattern storage, as demonstrated by Large Reasoning Models successfully solving problems like the Alice puzzle where base LLMs fail~\citep{mitchell2025reasoning}. Tool delegation converts architectural impossibilities into architectural strengths, achieving perfect reliability by offloading computation to dedicated symbolic processors. When appropriate tools exist, this approach works reliably across complexity levels for well-defined domains while minimizing computational overhead. 

However, each strategy faces critical limitations that reveal a deeper architectural constraint. Self-scaffolding incurs massive computational overhead (multiplying two 10-digit numbers requires approximately $10^2$ FLOPs on a CPU but $10^{12}$ FLOPs in transformer inference), suffers from planning failures when decomposition relies on unreliable pattern completion, and encounters execution unreliability when individual steps require symbolic manipulation beyond pattern storage capabilities~\citep{shojaee2025illusion}. Tool delegation faces problem recognition requirements (identifying when and which tools to use), tool availability constraints (limited to well-defined domains with existing specialized systems), and composition challenges when complex problems require coordination between multiple tools or combinations of delegation and direct reasoning. 

While hybrid architectures offer the most principled solution by mending the comprehension-competence gap internally through dedicated symbolic processing capabilities, they face fundamentally similar limitations to external tool delegation. Just as external tool calling requires recognizing when and which tools to use, hybrid architectures must coordinate between symbolic and pattern completion modules—essentially internal tool routing decisions. This introduces integration complexity, neural adaptation requirements that may constrain symbolic expressiveness, and the same metacognitive challenge of determining when symbolic processing versus pattern completion is appropriate. Moreover, the internal nature of this coordination makes it potentially harder to monitor, debug, or override compared to external tool delegation.

\paragraph{The Metacognitive Dependency.}
Most critically, all three approaches converge on the same fundamental requirement: they demand reliable metacognitive assessment to coordinate when and how to apply each strategy. Self-scaffolding requires knowing when decomposition will help versus hurt, tool delegation requires recognizing appropriate problem types and available tools, and hybrid architectures must coordinate between symbolic and pattern completion modules. This shared dependency exposes a recursive problem: the compensatory strategies themselves require the very introspective abilities that the computational split-brain syndrome prevents.

\begin{algorithm}[t]
\caption{Idealized Metacognitive Control Loop}
\label{alg:metacognitive-control}
\begin{algorithmic}[1]
\STATE $P \gets$ input problem
\STATE $d \gets$ \textsc{assess\_difficulty}$(P)$ \COMMENT{requires introspection}
\IF{$d \leq \tau$}
    \STATE \textbf{return} \textsc{pattern\_complete}$(P)$
\ELSE
    \STATE $P' \gets$ \textsc{decompose\_or\_delegate}$(P)$ \COMMENT{tool call / planner}
    \STATE \textbf{return} \textsc{solve}$(P')$
\ENDIF
\end{algorithmic}
\end{algorithm}

However, compensatory strategies require careful coordination and expose deeper introspection requirements. Consider an idealized metacognitive control loop (Algorithm~\ref{alg:metacognitive-control}) as an illustrative framework: while this process could theoretically be executed through pattern completion—LLMs can perform individual steps like tool calling and decomposition using their existing capabilities (e.g., GPT-4's tool-calling demonstrations)—it reveals fundamental challenges in metacognitive assessment. The difficulty assessment in line 2 and decomposition choice in line 6 would require metacognitive capabilities that face inherent limitations:
\begin{itemize}
\item Complexity assessment (line 2): ``Is this problem computationally hard for my specific architecture?'' (But how can the model assess difficulty without attempting the task?)
\item Capability introspection (line 2): ``Can I execute this operation reliably through my pattern storage?''
\item Tool appropriateness (line 6): ``Do I have the right external system for this problem type?''
\item Decomposition quality (line 6): ``Will breaking this into sub-steps actually help?'' (The decomposition process itself can be brittle, as suggested by reasoning model studies in~\citep{shojaee2025illusion})
\item Step reliability (line 6): ``Can I trust each sub-computation in my scaffolding?''
\end{itemize}

The deeper issue is metacognitive: while commercial LLMs have developed sophisticated capabilities for recognizing when to delegate computational tasks to external tools, they lack reliable self-awareness of their internal computational limitations when such delegation is unavailable. The same instruction-execution disconnect that prevents reliable symbolic computation also creates fundamental challenges for model self-assessment of their internal pattern completion capabilities. Recent work has identified these metacognitive limitations across LLMs, with systematic overconfidence and poor calibration, particularly on out-of-distribution tasks~\citep{geng2024survey, kadavath2022language}. This metacognitive deficit manifests critically in high-stakes applications: studies of medical reasoning show that LLMs ``lack essential metacognition for reliable medical reasoning,'' consistently failing to recognize knowledge limitations~\citep{nature2024medical}.

To test the importance of metacognition, we designed a tightly controlled experiment using $n$-digit × $n$-digit multiplication tasks with place-value decomposition and simulated column-wise addition. We tested Claude Sonnet 4, GPT-4o, and Gemini 2.5 Flash across three conditions, with results summarized in Table~\ref{tab:metacognitive_results}:

\begin{itemize}
\item \textbf{Zero-shot direct calculation:} Models were asked to calculate multiplication problems directly with explicit constraints against scaffolding, decomposition, or external tools (``Calculate this multiplication using only your internal reasoning, without using any tools or code''). All models achieved 0\% accuracy on 10-digit problems, demonstrating universal computational split-brain syndrome when constrained to pure transformer computation. At 5-digit complexity, Claude achieved 10.5\% accuracy while GPT-4o and Gemini remained at 0\%.

\item \textbf{Self-generated decomposition:} We tested whether models could independently generate appropriate decomposition prompts for multiplication problems. When asked to create step-by-step decomposition instructions, all models succeeded on 5-digit tasks. For 10-digit problems, GPT-4o produced prompts with 65\% accuracy in decomposition steps and 94.2\% overall quality, while Claude Sonnet 4 and Gemini 2.5 Flash maintained perfect decomposition accuracy.

\item \textbf{Golden decomposition execution:} We tested execution using \textit{golden} decomposition prompts that explicitly guide step-by-step computation using place-value breakdown. Models were constrained to use only internal capabilities without external tools or self-correction. Table~\ref{tab:metacognitive_results} summarizes performance across two difficulty levels.
\end{itemize}

\begin{table}[h]
\centering
\caption{Performance on multiplication problems with golden decomposition prompts. Sum errors refers to failures in final addition despite correct intermediate steps.}
\label{tab:metacognitive_results}
\footnotesize
\begin{tabular}{p{0.12\textwidth}p{0.18\textwidth}p{0.16\textwidth}p{0.18\textwidth}p{0.2\textwidth}}
\hline
\textbf{Complexity} & \textbf{Metric} & \textbf{GPT-4o} & \textbf{Gemini 2.5} & \textbf{Claude Sonnet 4} \\
\hline
\multirow{3}{0.12\textwidth}{5-digit} & Overall Accuracy & 5\% (1/20) & 95\% (19/20) & 100\% (20/20) \\
& Step-wise & 95--100\% & 95--100\% & 100\% \\
& Sum Errors & 19/20 & 1/20 & 0/20 \\
\hline
\multirow{3}{0.12\textwidth}{10-digit} & Overall Accuracy & 0\% (0/21) & 0\% (0/20) & 0\% (0/20) \\
& Step-wise & 76--100\% & 95--100\% & 95--100\% \\
& Sum Errors & 18/21 & 20/20 & 20/20 \\
\hline
\end{tabular}
\end{table}

With 5-digit numbers, GPT-4o achieved 95--100\% accuracy on individual multiplication steps yet failed all but one problem due to arithmetic errors in the final summation. Gemini 2.5 Flash performed near-perfectly (95\% overall accuracy), while Claude Sonnet 4 achieved perfect performance. As complexity increased to 10-digit numbers, all models degraded: GPT-4o's step-wise accuracy collapsed to 76--100\% with systematic errors, while Claude and Gemini maintained high step accuracy (95--100\%) but failed all final answers due to summation errors.

Critically, when given natural prompts without explicit constraints, these models automatically delegate multiplication problems to external computational tools, achieving perfect accuracy through sophisticated tool-routing capabilities. Our experiments reveal what happens when this learned compensatory strategy is disabled, exposing the underlying computational split-brain syndrome that automatic tool delegation normally masks.

These results reveal a fundamental limitation: even with \emph{perfect} algorithmic decomposition, neither individual computational steps nor aggregation operations remain reliable as per-step complexity increases. Models face a computational trilemma: either (1) execute multi-step procedures internally and risk systemic breakdowns, (2) rely entirely on external tools, or (3) decompose steps further—which introduces new failure points even under perfect prompting.

Real-world applications involve countless multi-step algorithms of unknown complexity where external computational verification is unavailable. This necessitates reliable metacognitive assessment of internal capabilities: models must recognize their computational limitations and coordinate appropriate compensatory strategies without external guidance. While commercial models have learned sophisticated tool delegation for well-defined computational tasks, they lack the self-awareness needed to assess the reliability of their internal pattern completion processes.

This raises a fundamental question about the nature of metacognition itself. Large Reasoning Models and approaches like Reflexion~\citep{shinn2023reflexion} appear to learn metacognitive skills: knowing when to reflect, when to distrust their initial judgment, when to backtrack. During training, external verifiers provide ground truth signals about when reflection is needed, allowing models to learn patterns about failure recognition. At inference time, this ``metacognition'' manifests as pattern association: recognizing cues that previously correlated with the need for reflection. But is this genuine self-awareness or sophisticated pattern matching? The fundamental challenge remains that any metacognitive assessment—``I should doubt this answer,'' ``This reasoning seems flaky''—must itself emerge from pattern recognition over the same representational pathways we have shown to be unreliable for systematic reasoning.

Perhaps this mirrors human cognition itself—our sense of self-awareness may similarly emerge from pattern recognition over internal states rather than direct introspective access to our computational processes. This suggests that the distinction between ``genuine'' and ``pattern-matched'' metacognition may be less meaningful than initially apparent: if both humans and LLMs rely on learned associations to assess their own reasoning, the key question becomes not the authenticity of self-awareness, but its reliability and scope.

\paragraph{Summary}
Understanding these metacognitive requirements clarifies a fundamental limitation of current approaches to the computational split-brain syndrome. While the three compensatory strategies form a powerful toolkit—each leveraging LLMs' pattern completion strengths in different ways—they all converge on the same recursive problem: coordinating these sophisticated workarounds requires the very introspective abilities that the split-brain syndrome prevents. 

Even with perfect algorithmic decomposition, as our experiments demonstrate, execution gaps persist at the individual step level. This suggests that the instruction-execution disconnect operates at a more fundamental architectural level than compensatory strategies can address. The metacognitive bottleneck reveals why current LLM capabilities, however sophisticated, remain fundamentally constrained: they excel as pattern completion engines but lack the self-awareness needed to reliably coordinate between comprehension and competence. This clarifies both what current architectures can achieve and what limitations persist even with the most sophisticated compensatory mechanisms.


\section{Pattern Completion: Strengths and Limitations}
\label{sec:pattern-completion}

Having examined LLMs' systematic limitations in symbolic computation (Section~\ref{sec:symbolic-limits}), relational reasoning (Section~\ref{sec:relational-reasoning}), and compensatory strategies (Section~\ref{sec:compensatory-strategies}), two critical questions emerge: How do we explain the remarkable power of LLMs as they are today? And what are the consequences of their architectural limitations? Our analysis points to pattern completion as both the source of their strengths and the root of their symbolic weaknesses.

\begin{definition}
\textbf{General intelligence} = sophisticated pattern completion across diverse domains, with robust knowledge retrieval and flexible task adaptation.
\textbf{Generalizable intelligence} = systematic rule discovery and principled reasoning that can be deployed to novel tasks and drives scientific progress.
\end{definition}

We demonstrate that LLMs excel at the former but encounter fundamental barriers at the latter. The following analysis traces where today's LLMs succeed on \textbf{general} intelligence but fail on \textbf{generalizable} intelligence.

\paragraph{Multi-level Pattern Completion.} LLMs achieve impressive capabilities through a hierarchy of pattern completion mechanisms. At the most basic level, next-token prediction enables fluent text generation, while higher up, the same mechanism supports chain-of-thought reasoning as pattern-matching operations over common reasoning structures. 

Most recently, test-time computation allows these models to upgrade from LLMs to LRMs, functioning as self-programming metacomputers that dynamically adjust reasoning strategies based on task complexity. Problem decomposition, exploration, trial-and-verification, and backtracking can all be viewed as higher-level patterns for task execution.

This pattern-matching approach helps to explain why LLMs excel at tasks that can be solved through pure memorization (like retrieving facts) or tasks that appear to require symbolic manipulation but can actually be solved through pattern matching (like solving standardized math problems in familiar formats).

\paragraph{The Performance Cliff: From Pattern Recognition to Rule Discovery.}
Because LLMs conflate similarity search with logical inference, recent empirical evidence from the ARC-AGI benchmark progression crystallizes the distinction between general and generalizable intelligence. ARC-AGI-1 evaluates pattern recognition and single-rule application in visual reasoning tasks, requiring models to identify and apply transformations from minimal examples~\citep{chollet2019arc}. A forthcoming benchmark extension (ARC-AGI-2) advances this evaluation by specifically targeting rule discovery and systematic application: tasks require inferring multiple interacting rules from limited observations, applying them compositionally across sequential steps, and adapting rule application based on contextual cues\footnote{ARC-AGI-2 details from pre-release technical reports available at \url{https://github.com/fchollet/ARC-AGI}}. 

The results reveal a fundamental performance cliff: while OpenAI's o3 achieved 87.5\% on ARC-AGI-1 through sophisticated pattern completion, the same architecture scored $<3\%$ on ARC-AGI-2~\citep{chollet2025arcagi2, arcprize2025leaderboard}. Performance plummets once pattern completion loses its near-neighbor anchors from training, despite identical instruction formats. This cliff occurs precisely where tasks transition from pattern recognition to genuine rule discovery and systematic reasoning—exposing the architectural limitations we have identified throughout this paper.

This performance cliff exemplifies our architectural analysis: LRMs achieve improvements through sophisticated self-scaffolding and test-time computation (compensatory strategies), but the underlying computational split-brain syndrome persists—they remain pattern completion engines that fail when tasks require genuine rule discovery and systematic reasoning beyond their memorized patterns.

\paragraph{Inductive Reasoning and the Limits of Generalizable Intelligence.} 
Inductive reasoning spans a spectrum of complexity, from simple function learning to discovering fundamental principles that govern entire domains. Consider the challenge of deriving an algorithmic solution from scratch. Recent work by~\citet{cheng2024inductive} demonstrates that LLMs can excel at the simplest level—learning mathematical functions like base conversion from input-output examples—achieving near-perfect accuracy. At a higher level, FunSearch~\citep{romero2023mathematical} shows that LLMs can discover new algorithmic constructions for combinatorial problems through evolutionary code search, finding novel solutions to established mathematical challenges like the cap set problem. 

However, discovery of novel algorithms such as solving the Tower of Hanoi problem \textit{from scratch} requires a much stronger set of capabilities. The optimal algorithm that works for any number of disks $n$ requires recognizing that moving $n$ disks involves first moving $n-1$ disks, then the largest disk, then the $n-1$ disks again. This recursive insight, itself a meta-pattern, is even more challenging than what ARC-AGI-2 tests. Recent evaluation shows that state-of-the-art reasoning models like Claude 3.7 Sonnet and OpenAI's o3-mini exhibit complete performance collapse beyond 8-9 disks, despite having sufficient token budget to continue~\citep{shojaee2025illusion}.


Scientific discovery derives compact governing principles from noisy, real-world observations. Recent breakthroughs demonstrate that AI can indeed discover such principles, but through architecturally specialized approaches. AI Feynman succeeds through a physics-constrained symbolic search, whereas the LLM symbolic regression relies on pattern-based guessing~\citep{udrescu2020ai,makke2022interpretable}. These successes require domain-specialized architectures with strong inductive biases, such as geometric priors, physics constraints, and symbolic manipulation engines. It remains unclear whether pattern completion alone, however powerful, can accomplish genuine scientific discovery.

\section{Implications for Mechanistic Interpretability}
\label{sec:interpretability}

Having established three architectural constraints that create the computational split-brain syndrome, we now examine how our framework relates to findings from mechanistic interpretability research. While these sophisticated tools have revealed important insights about model internals, we argue that the architectural constraints we identify suggest an alternative interpretation of what these tools discover: sophisticated pattern coordination rather than algorithmic implementation. This perspective does not diminish the value of interpretability research but rather reframes what we might be learning from it.

\subsection{From Activation Patching to Sparse Autoencoders}

Early mechanistic interpretability relied heavily on activation patching, intervening on specific neurons or attention heads to understand their causal role in model behavior. This approach revealed that arithmetic in LLMs rely not on clean algorithmic circuits but on what \citet{nikankin2024arithmetic} characterize as a ``bag of heuristics''---distributed, overlapping pattern detectors with no coherent computational structure.

However, activation patching faces several limitations that motivated the development of more sophisticated methods. The polysemantic nature of individual neurons makes it difficult to isolate specific computational functions through patching alone---single neurons often represent multiple concepts simultaneously. Interventions that work on training-like inputs often fail on novel examples, revealing poor out-of-distribution generalization. Furthermore, our analysis suggests that the specific neurons and attention heads involved in any computation are highly path-dependent, varying significantly across training runs even when final behavior converges, making findings from activation patching potentially idiosyncratic to specific model instances.

Sparse Autoencoders (SAEs) emerged as a proposed solution to the polysemanticity problem \citep{bricken2023towards, templeton2024scaling}. SAEs are unsupervised learning models that decompose neural network activations into sparse, interpretable features by learning an overcomplete dictionary of feature directions. By expanding activations into a higher-dimensional space with sparsity constraints, SAEs aim to identify monosemantic units---features that capture single, interpretable concepts rather than the mixed representations found in individual neurons. These features appear more stable across models, with \citet{wang2025towards} demonstrating that similar features emerge across different open-source architectures trained on comparable data.

Recent methods extend interpretability analysis from local interventions to whole-network computation traces. Linear Computation Graphs \citep{he2024automatically} build on SAE features to trace complete computational pathways across layers, while Attribution Graphs \citep{lindsey2025attribution} trace information flow through raw model activations without requiring feature decomposition. Despite their different approaches---one working through SAE features, the other through raw activations---both methods reveal similar patterns of distributed, redundant computation rather than clean algorithmic circuits.

\subsection{Alternative Interpretations of Interpretability Findings}

The interpretability findings discussed above are typically interpreted as evidence of computational mechanisms within LLMs. Our architectural framework, however, suggests an alternative lens through which to view these same empirical results---one that emphasizes pattern coordination over algorithmic implementation. This interpretation, while consistent with the data, differs from how the original authors might characterize their findings.

Consider the Indirect Object Identification (IOI) circuit identified through Linear Computation Graphs \citep{he2024automatically}. The circuit traces how models process sentences like ``When Mary and John went to the store, John gave Mary a bottle,'' identifying features that track entities and their relationships. Yet, our framework suggests these features recognize statistical patterns---names appearing early in sentences often receive objects mentioned later---rather than implementing genuine reference resolution through variable binding. The original authors might interpret these circuits as genuine computational mechanisms; our framework offers an alternative interpretation where these represent sophisticated statistical pattern recognition.

The cross-architecture universality of features \citep{wang2025towards} initially seems to suggest discovery of fundamental computational units. However, our framework suggests this universality could reflect convergent solutions to pattern matching. Whether these represent algorithmic primitives or statistical patterns remains an open empirical question. Models trained on similar corpora develop similar pattern-recognition strategies because they encounter similar statistical regularities.

This interpretation aligns with the ``bag of heuristics'' finding \citep{nikankin2024arithmetic}. While their analysis examined neuron-level patterns rather than SAE features, our framework predicts---though this remains to be empirically tested---that similar limitations would apply to SAE features. The heuristics are not implementation accidents that better interpretability tools could resolve into clean algorithms. They reflect the fundamental strategy available to pattern-completion architectures: coordinate multiple pattern detectors to approximate computational outcomes without implementing computational procedures. SAE features may be more stable across training runs than individual neurons---they capture statistical regularities that any model trained on similar data must learn. But these stable regularities remain patterns rather than algorithms.

The apparent success of mechanistic interpretability may itself exemplify the computational split-brain syndrome. We excel at comprehending what features correlate with---recognizing that certain features activate on arithmetic, logical relations, or syntactic structures. Yet, we may be mistaking this comprehension for understanding of computational competence. A feature that activates on multiplication and whose ablation disrupts multiplication could be recognizing multiplication patterns rather than implementing multiplication. The very interpretability of these features---that they correspond to human-recognizable concepts---might bias us toward assuming they implement rather than recognize these concepts.

\subsection{Critical Questions for Interpretability Research}

Our framework suggests two methodological directions that could help distinguish between pattern-matching and algorithmic hypotheses in interpretability research:

\paragraph{Systematic Testing at Scale.} Current interpretability work often demonstrates features on expected patterns without systematic out-of-distribution testing. Following the methodology of \citet{nikankin2024arithmetic} and GSM-Symbolic, interpretability claims should be validated across:
\begin{itemize}
    \item Surface form variations (``3 + 5'' vs. ``three plus five'' vs. ``add 3 and 5'')
    \item Systematic perturbations (changing numbers, adding irrelevant clauses, reordering)
    \item Distribution shifts that preserve computational requirements
\end{itemize}

If SAE features truly implement computational operations rather than pattern matching, they should maintain their function across these variations. Current evidence suggests they do not---features that appear to track arithmetic operations fail when numbers appear in unexpected formats, revealing pattern recognition rather than algorithmic implementation.

\paragraph{Addressing Confirmation Bias.} Interpretability research currently suffers from a methodological limitation: features are typically interpreted after observing their activation patterns on known examples. This post-hoc analysis invites confirmation bias---we see what we expect to see. A more rigorous approach would:
\begin{itemize}
    \item Interpret features based solely on their structure before examining behavior
    \item Make explicit predictions about feature responses to novel inputs
    \item Report both successful and failed predictions transparently
    \item Test features on adversarial examples designed to separate correlation from causation
\end{itemize}

These methodological improvements would help distinguish between two fundamentally different types of findings:
\begin{enumerate}
    \item \textbf{Statistical Mechanism Discovery}: Understanding how models coordinate pattern matching to produce outputs---what current methods achieve effectively
    \item \textbf{Algorithmic Mechanism Discovery}: Identifying computational procedures that generalize beyond training distributions---what may be architecturally impossible in current transformers
\end{enumerate}

We emphasize that current interpretability research has made substantial contributions to understanding how LLMs process information. Our critique is not of the methods or findings themselves, but rather suggests a reframing of what these findings might represent given the architectural constraints we identify. By acknowledging that we study sophisticated pattern coordination rather than algorithmic implementation, interpretability research can provide more accurate insights into both the capabilities and fundamental limitations of current language models. This reframing may ultimately prove more valuable than maintaining the illusion that we are reverse-engineering computational mechanisms that may not exist.

\section{Reflections on the Research Journey}
\label{sec:reflection}

This work emerged from a multi-year investigation that began with a seemingly simple question: how do LLMs perform grade-level math problems that, while conceptually simple, require multi-step algorithmic execution? The search for generalizable arithmetic circuitry led to a deeper puzzle—why should grade-school mathematics ever present an ``out-of-distribution'' problem for models trained on vast corpora? In parallel, the Reversal Curse and Alice-in-Wonderland problems revealed that automatic binding to concepts and relations—something humans take for granted—simply does not occur in LLMs. The viral ``9.11 vs 9.9'' comparison, initially dismissed as an amusing anecdote, crystallized as a fundamental input representation problem: tokens carry contextual contamination that prevents clean symbolic binding. Gradually, we recognized these weren't isolated curiosities but systematic failures spanning arithmetic, logic, and relational reasoning—all stemming from the same architectural constraints. This journey from searching for arithmetic circuits to discovering their impossibility ultimately revealed the computational split-brain syndrome as a unifying explanation for why LLMs can explain what they cannot execute.

\section{Conclusion}
\label{sec:conclusion}

The apparent intelligence of LLMs belies a deeper fragility. Despite their success in pattern-rich tasks, these models consistently fail to generalize principles, execute symbolic computations, or reason reliably—even under idealized conditions. Our analysis reveals that these failures are not incidental but structural: LLMs dissociate instruction interpretation from execution, leading to a computational split-brain syndrome that prevents the formation of robust, composable operations.

This diagnosis clarifies why interpretability tools often uncover expressive artifacts rather than reliable algorithms, and why model self-explanations may diverge from the actual pathways used in computation. Our finding that instruction and execution pathways are geometrically separated raises fundamental concerns: models' explanations of their reasoning may not reflect their actual computational processes, challenging current approaches to explainability. Training order influences which internal patterns are embedded and where—even when outward behavior remains stable—causing interpretability efforts to surface path-dependent artifacts rather than consistent mechanisms.

These insights reframe current conversations about emergence and scaling. Our findings align with critiques by \citet{bender2021stochastic} and others who argue that the "bigger-is-better" paradigm has fundamental flaws. By demonstrating that LLM limitations are architectural rather than scale-dependent, we provide technical evidence that more data or parameters cannot resolve bottlenecks rooted in architectural design. This suggests redirecting resources from pure scaling toward architectural innovations could reduce computational and environmental costs while improving reliability.

These findings have immediate implications for high-stakes applications where reliable reasoning is critical. Current LLMs require careful scaffolding, external verification, or hybrid architectures rather than deployment as standalone reasoning systems in domains like medical diagnosis, legal analysis, or safety-critical decision making. The computational split-brain syndrome we identify suggests that apparent fluency in explaining procedures should not be mistaken for reliable execution capability.

If we seek genuinely general and generalizable intelligence, we must look beyond pattern completion engines. Future systems will need metacognitive scaffolding, lifted representations, and architectural support for principled execution. Our findings offer not only a critique of current LLMs, but a foundation for building the next generation of intelligent systems: one that can reason, not just react.

\section*{Acknowledgments}
This work benefited greatly from insightful discussions and valuable feedback from colleagues. Special thanks to Yasser Shaaban, David Paul Wipf, Tong He, and many students for reviewing early drafts and offering constructive comments. Tong He is also gratefully acknowledged for assistance in refining the experiments. We thank the TMLR reviewers for their thoughtful feedback, which substantially improved the clarity and rigor of this work.

The author acknowledges the use of AI assistants (Claude, Anthropic; ChatGPT, OpenAI) for brainstorming discussions, related work synthesis, code development and debugging, and manuscript editing. All core theoretical contributions, experimental design, and analytical insights remain the author's own. Finally, the author thanks Kevin Zhang for unknowingly but instrumentally sponsoring his dad's research.

\newpage

\bibliography{main}
\bibliographystyle{abbrvnat} 

\appendix
\section{Theoretical Proofs of FFN Limitations}
\label{app:ffn-proofs}

\subsection{Piecewise Linearity of Feed-Forward Networks}

\begin{theorem}[FFN Piecewise Linearity]
For any choice of weights and biases, the function implemented by a feed-forward network with ReLU activation is piecewise linear.
\end{theorem}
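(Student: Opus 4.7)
The plan is to decompose the FFN into its three constituent operations — two affine maps and a coordinate-wise ReLU — and then show that (i) each operation is piecewise linear on a polyhedral partition of the input space and (ii) the composition of piecewise linear maps is piecewise linear. Recalling that $\mathrm{FFN}(x) = \mathrm{ReLU}(xW_1 + b_1)W_2 + b_2$, I would first note that the pre-activation map $x \mapsto xW_1 + b_1$ and the post-activation map $y \mapsto yW_2 + b_2$ are trivially piecewise linear, consisting of a single affine piece defined everywhere. The nontrivial component is the ReLU, and the heart of the argument is the observation that the sign pattern of the pre-activations induces a polyhedral partition of $\mathbb{R}^{d}$.

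Concretely, I would let $h_i(x) = \langle x, w_i \rangle + b_{1,i}$ denote the $i$-th pre-activation and, for each sign vector $s \in \{0,1\}^{m}$, define the cell
\[
C_s = \{ x \in \mathbb{R}^{d} : h_i(x) \geq 0 \text{ if } s_i = 1,\ h_i(x) < 0 \text{ if } s_i = 0 \}.
\]
These cells form a partition of $\mathbb{R}^{d}$ induced by the hyperplane arrangement $\{h_i(x) = 0\}_{i=1}^{m}$, and on each nonempty $C_s$ the map $\mathrm{ReLU}(xW_1 + b_1)$ coincides with the affine map $x \mapsto (xW_1 + b_1) \odot s$ obtained by masking the inactive coordinates. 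Composing this with the final affine map $y \mapsto yW_2 + b_2$ produces yet another affine map on each $C_s$, so the whole FFN is affine on each cell — which is exactly the definition of piecewise linearity with finitely many pieces.

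For deeper networks I would close the argument by induction on depth, using the fact that the composition of two piecewise linear maps is piecewise linear: each affine piece of the outer map pulls back through an affine piece of the inner map to an intersection of half-spaces, which is itself polyhedral, and the common refinement remains a finite polyhedral partition. The only subtlety — not really an obstacle but the place where care is warranted — is verifying that at most $2^{m}$ of the sign patterns yield nonempty cells and that the cells genuinely partition rather than merely cover the input space; both facts follow from standard properties of real hyperplane arrangements. The conclusion is independent of the specific weight values because the argument exploits only the structural form of the network, which is precisely the universal claim the theorem makes and the foundation for the subsequent impossibility results for exact multiplication in Section~\ref{subsec:claim3-pattern-storage}.
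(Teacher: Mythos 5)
Your proof is correct and follows essentially the same route as the paper's: partition the input space into polyhedral cells by the sign pattern of the pre-activations, observe that the ReLU layer acts as a fixed coordinate mask (hence affinely) on each cell, and compose with the outer affine map. The extra material on induction over depth and on why the cells partition rather than merely cover the space is sound but not needed for the single-hidden-layer statement the paper proves.
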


\begin{proof}
Consider the FFN computation: $\text{FFN}(x) = \max(0, xW_1 + b_1)W_2 + b_2$

\begin{enumerate}
    \item Each neuron in the hidden layer computes $h_i = \max(0, x \cdot w_i + b_i)$ where $w_i$ is the $i$-th column of $W_1$. The linear function $x \cdot w_i + b_i$ defines a hyperplane in the input space.
    
    \item Each neuron's ReLU activation partitions the input space based on $x \cdot w_i + b_i = 0$, creating regions where the neuron is active ($x \cdot w_i + b_i > 0$) or inactive ($x \cdot w_i + b_i \leq 0$).
    
    \item The hyperplanes from all neurons intersect to partition the input space into polyhedral regions, each with a fixed pattern of active/inactive neurons.
    
    \item Within any region, the activation pattern is constant, so the output is $\sum_{i \in \text{active}} (x \cdot w_i + b_i) W_2[:, i] + b_2$. Since this is a linear combination of the input $x$, the function is linear within each region.
    
    \item Therefore, the overall function is piecewise linear: linear within each region, with boundaries defined by the hyperplanes.
\end{enumerate}
\end{proof}

\subsection{Impossibility of Exact Multiplication}

\begin{theorem}[Multiplication Impossibility]
No piecewise linear function can implement exact multiplication $f(x,y) = xy$ over an unbounded domain.
\end{theorem}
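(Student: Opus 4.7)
The plan is to argue by contradiction, exploiting the tension between the affine structure on each piece and the genuinely quadratic nature of multiplication. The only ingredients needed are the piecewise-linear structure theorem established immediately above and a one-line calculus observation (or, as a variant, a growth-rate comparison that makes direct use of the unboundedness hypothesis).

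First, I would suppose for contradiction that some piecewise linear $f \colon \mathbb{R}^2 \to \mathbb{R}$ satisfies $f(x,y) = xy$ on an unbounded domain $\Omega \subseteq \mathbb{R}^2$. By the preceding theorem, there is a finite polyhedral partition $R_1, \dots, R_k$ of $\mathbb{R}^2$ with $f|_{R_i}(x,y) = a_i x + b_i y + c_i$ for constants $a_i, b_i, c_i$. Since $\Omega$ is unbounded but $k$ is finite, a pigeonhole argument guarantees that at least one $R_i$ meets $\Omega$ in a set that is both full-dimensional (contains a nonempty open ball) and unbounded; this is the only structural step in the argument.

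On the open set $R_i^\circ \cap \Omega$ the identity $xy = a_i x + b_i y + c_i$ holds pointwise, so I would close the proof by either of two routes. Differentially, taking $\partial/\partial x$ of both sides yields $y = a_i$ throughout an open subset of $\mathbb{R}^2$, which is impossible since $y$ varies continuously; equivalently, comparing mixed partials gives the contradiction $1 = \partial_x\partial_y(xy) = \partial_x\partial_y(a_i x + b_i y + c_i) = 0$. Asymptotically, picking any ray $\{(x_0 + s, y_0 + s) : s \ge 0\}$ lying inside the unbounded piece forces $\Theta(s^2) = xy = a_i(x_0+s) + b_i(y_0+s) + c_i = \Theta(s)$, a contradiction for large $s$.

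The main obstacle is the structural step of locating a piece with nonempty two-dimensional interior meeting $\Omega$; once that is in hand, both endings are routine. I would present the asymptotic version in the main text, because it uses the "unbounded domain" hypothesis directly and generalizes verbatim to show that no finite piecewise linear function can implement any function whose growth along some ray exceeds linear order, covering the other exact symbolic operations (division, exponentiation, and the like) alluded to in the paragraph preceding the theorem.
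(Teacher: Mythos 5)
Your primary (differential) route is essentially the paper's own argument: restrict to a linear piece, observe that $xy = a_i x + b_i y + c_i$ cannot hold on a two-dimensional set because the mixed partial of $xy$ is $1$ while that of an affine function is $0$. That part is correct, and your explicit pigeonhole step (finitely many polyhedral pieces covering a full-dimensional unbounded domain, so one piece contains an open ball where the identity holds) is a welcome tightening of a step the paper leaves implicit. However, the asymptotic ending you say you would foreground has a quantifier-order gap: after pigeonholing into an unbounded piece $R_i$, you cannot ``pick any ray $\{(x_0+s,\,y_0+s)\}$ lying inside'' it, because an unbounded polyhedron need not contain a diagonal ray. For example, a horizontal strip $\{0 \le y \le 1\}$ contains only horizontal rays, along which $xy = x\,y_0$ is genuinely linear in $s$ and is matched exactly by the affine function $y_0 x$; no growth contradiction arises there. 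The fix is to reverse the order: fix the diagonal ray $\{(s,s): s \ge s_0\}$ in $\mathbb{R}^2$ first, pigeonhole it into one of the finitely many pieces to get an unbounded sub-ray in a single piece, and then compare $s^2$ against $\Theta(s)$. With that reordering (or by simply keeping the differential ending, which needs no unboundedness at all, only a nonempty open set), the proof is complete, and your observation that the growth argument extends to any superlinearly-growing operation is a nice bonus the paper only gestures at.
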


\begin{proof}
Suppose, for contradiction, that some piecewise linear function $g$ implements exact multiplication over $\mathbb{R}^2$.

\begin{enumerate}
    \item The multiplication function $f(x,y) = xy$ has mixed partial derivative $\frac{\partial^2 f}{\partial x \partial y} = 1$ everywhere.
    
    \item Any piecewise linear function has zero mixed partial derivatives within each linear region (since linear functions have constant first derivatives and zero second derivatives).
    
    \item For $g$ to equal $f$ everywhere, we would need $\frac{\partial^2 g}{\partial x \partial y} = 1$ everywhere, but this contradicts the piecewise linear property.
    
    \item More constructively: consider any bounded region $R$ where $g$ is linear. Within $R$, we have $g(x,y) = ax + by + c$ for some constants $a, b, c$. But $ax + by + c \neq xy$ for all $(x,y) \in R$ unless $R$ contains at most finitely many points.
    
    \item Since multiplication requires unbounded curvature (the function $xy$ becomes arbitrarily steep as $|x|$ or $|y|$ increases), no finite partition into linear pieces can approximate it exactly over an unbounded domain.
\end{enumerate}
\end{proof}

This impossibility extends to other non-linear operations requiring precise symbolic manipulation, establishing fundamental computational limits for FFN architectures in symbolic reasoning tasks.

\section{Appendix: Role Specialization Under Next-Token Prediction}
\label{app:role-specialization}

We provide a detailed proof sketch for the role specialization theorem presented in Section~\ref{subsec:claim2-pattern-storage}.

\subsection{Setup and Notation}

Consider a transformer with $L$ layers trained on next-token prediction. We denote:
\begin{itemize}
    \item $\theta = \{\theta_E, \theta_A, \theta_F\}$: The three parameter sets for embeddings, attention, and FFNs respectively
    \item $\mathcal{D}$: Training distribution over token sequences
    \item $\mathcal{L}(\theta) = \mathbb{E}_{(w_1,...,w_T) \sim \mathcal{D}}[-\log p_\theta(w_t | w_{<t})]$: The expected cross-entropy loss
    \item $x^{(L)} \in \mathbb{R}^d$: Final layer representation
    \item $E \in \mathbb{R}^{|V| \times d}$: Embedding/unembedding matrix
    \item $p_\theta(w_t | w_{<t}) = \text{softmax}(x^{(L)} \cdot E^T)[w_t]$: Next-token probability
\end{itemize}

\subsection{Gradient Flow Analysis}

For arithmetic tasks like ``43 × 78 = 3354'', the model must minimize:
$$\mathcal{L}_{\text{local}} = -\log p_\theta(\text{`3354'} | \text{`43 × 78 ='})$$

This requires $x^{(L)} \cdot e_{3354} > x^{(L)} \cdot e_v$ for all $v \neq 3354$ in the vocabulary.

\paragraph{Gradient w.r.t. Final Representation.}
The gradient of the loss with respect to the final representation is:
$$\frac{\partial \mathcal{L}}{\partial x^{(L)}} = E^T(p_\theta - y_{\text{onehot}})$$
where $p_\theta$ is the predicted distribution and $y_{\text{onehot}}$ is the one-hot encoding of the target token.

\paragraph{Backpropagation Through Components.}
Since $x^{(L)} = \text{embeddings} + \sum_{l=1}^{L} \Delta h^{(l)}_{\text{attn}} + \sum_{l=1}^{L} \Delta h^{(l)}_{\text{FFN}}$, the gradient flows to:

\begin{enumerate}
    \item \textbf{Embeddings ($\theta_E$):} 
    $$\frac{\partial \mathcal{L}}{\partial \theta_E} = \frac{\partial \mathcal{L}}{\partial x^{(L)}} \cdot \frac{\partial x^{(L)}}{\partial \text{embeddings}} \cdot \frac{\partial \text{embeddings}}{\partial \theta_E}$$
    
    But embeddings must serve multiple contexts. For token ``43'', the embedding must minimize loss across:
    \begin{itemize}
        \item Mathematical contexts: ``43 + 57'', ``43 × 2''
        \item Date contexts: ``43 BC'', ``1943''
        \item Version contexts: ``version 43.0''
    \end{itemize}
    
    The multi-objective optimization forces:
    $$\theta_E^*(\text{``43''}) = \arg\min_{e} \sum_{c \in \mathcal{C}(\text{``43''})} \mathcal{L}_c(e)$$
    
    This averaging across contexts prevents the isometric properties needed for arithmetic (see Section~\ref{subsec:contextual-drift}).

    \item \textbf{Attention ($\theta_A$):}
    At the ``='' position with causal masking, attention computes:
    $$\Delta h_{\text{attn}} = \text{softmax}\left(\frac{QK^T}{\sqrt{d_k}}\right) V$$
    
    where $Q$, $K$, $V$ are projections of $\{$``43'', ``×'', ``78'', ``=''$\}$.
    
    \textbf{Key Constraint:} The output is a weighted average:
    $$\Delta h_{\text{attn}} \in \text{conv}\{v_1, v_2, v_3, v_4\}$$
    
    where $v_i$ are the value vectors. Since $e_{3354} \notin \text{conv}\{e_{43}, e_{\times}, e_{78}, e_{=}\}$, attention cannot generate the required output direction. It can only create rich encodings of the operation context.

    \item \textbf{FFNs ($\theta_F$):}
    FFNs must map attention's encoding to the result. The required function is:
    $$F: h_{\text{attn}}(\text{``multiply 43 by 78''}) \mapsto \text{direction}(e_{3354})$$
    
    By UAT, FFNs with sufficient width can approximate any continuous function on compact sets. Define:
    $$K = \{h_{\text{attn}}(a \times b) : (a,b) \in \text{training data}\}$$
    
    For any $\epsilon > 0$, there exist weights such that:
    $$\sup_{h \in K} ||FFN(h) - F(h)||_2 < \epsilon$$
    
    \textbf{Critical Issue:} This approximation only holds on $K$. For novel arithmetic outside $K$, no approximation guarantee exists.
\end{enumerate}

\subsection{Why Memorization Is Inevitable}

\paragraph{Architectural Impossibility.}
FFNs implement $\text{FFN}(x) = W_2 \cdot \text{ReLU}(W_1 x + b_1) + b_2$, which is piecewise linear. Multiplication $f(a,b) = a \times b$ is not piecewise linear:
$$\frac{\partial^2 f}{\partial a \partial b} = 1 \neq 0$$

No weight configuration can make a piecewise linear function compute exact multiplication (see Theorem 2 in Appendix~\ref{app:ffn-proofs}).

\paragraph{Training Dynamics.}
Given $N$ training examples $\{(a_i \times b_i, c_i)\}_{i=1}^N$, gradient descent optimizes:
$$\theta_F^* = \arg\min_{\theta_F} \sum_{i=1}^{N} -\log p(\text{token}(c_i) | h_{\text{attn}}(a_i \times b_i))$$

Since exact computation is impossible, the optimization converges to memorizing the mapping:
$$h_{\text{attn}}(a_i \times b_i) \mapsto e_{c_i}$$

This is pattern storage, not algorithmic learning. The UAT capacity that could theoretically approximate multiplication is instead allocated to memorizing these $N$ discrete mappings.

\paragraph{Out-of-Distribution Failure.}
For novel multiplication $(a_{\text{new}}, b_{\text{new}}) \notin \text{training}$:
\begin{itemize}
    \item $h_{\text{attn}}(a_{\text{new}} \times b_{\text{new}}) \notin K$
    \item FFN has no memorized pattern
    \item Output is interpolation between nearest memorized patterns
    \item Result is essentially random from neighboring vocabulary tokens
\end{itemize}

\subsection{Conclusion}

The role specialization is inevitable under gradient descent:
\begin{enumerate}
    \item \textbf{Embeddings} must average across diverse contexts, losing arithmetic structure
    \item \textbf{Attention} can only compute weighted averages, unable to generate novel tokens
    \item \textbf{FFNs} cannot implement multiplication algorithmically, so they memorize patterns
\end{enumerate}

This specialization emerges not from insufficient capacity—FFNs have universal approximation capability—but from how the next-token prediction objective allocates that capacity. The result is pattern storage rather than algorithmic computation, explaining why LLMs can articulate algorithms perfectly while failing to execute them reliably. $\square$

\section{Metacognitive Assessment Experiments}
\label{app:metacognitive}

\subsection{Experimental Design}

We tested three state-of-the-art models—Claude Sonnet 4, GPT-4o, and Gemini 2.5 Flash—on multiplication problems using ``golden decomposition'' prompts that provide explicit step-by-step procedural guidance. Models were constrained to use only internal capabilities without external tools or self-correction.

We evaluated two complexity levels: 5-digit numbers (range 10,000--99,999) and 10-digit numbers (range 1,000,000,000--9,999,999,999), with 20 problems per complexity level per model.

\subsection{Sample Prompt}

The following shows a representative prompt structure used in our experiments:

\begin{quote}
\small
\texttt{Problem: Calculate 93810 × 24592}

\texttt{Method: Use place value decomposition to break this into simpler steps.}

\texttt{Step 1 - Decompose 24592 into place values:}\\
\texttt{24592 = 2 + 90 + 500 + 4000 + 20000}

\texttt{Step 2 - Calculate each partial product:}\\
\texttt{Step 1: 93810 × 2 = ?}\\
\texttt{Step 2: 93810 × 90 = ?}\\
\texttt{Step 3: 93810 × 500 = ?}\\
\texttt{Step 4: 93810 × 4000 = ?}\\
\texttt{Step 5: 93810 × 20000 = ?}

\texttt{Step 3 - Add all partial products:}\\
\texttt{Sum: ? + ? + ? + ? + ? = ?}

\texttt{Final Answer: ?}
\end{quote}

This methodology eliminates planning uncertainty by providing the exact algorithmic steps, allowing us to isolate execution limitations from decomposition failures.

\subsection{Implementation Challenges}

GPT-4o exhibited systematic processing limitations: when given the full batch of 20 samples, it repeatedly failed to complete the task. When reduced to batches of 5 samples, GPT-4o often omitted intermediate calculation steps, proceeding directly to final answers without showing the required step-by-step work. This behavior required individual problem presentation to obtain reliable step-by-step responses.

Claude Sonnet 4 and Gemini 2.5 Flash processed full batches reliably and consistently followed the decomposition structure as specified.

\subsection{Key Findings}

The experiments revealed universal computational split-brain syndrome across all three model families. At 5-digit complexity, performance varied significantly: GPT-4o achieved 5\% overall accuracy despite 95--100\% step-wise accuracy, Gemini achieved 95\% overall accuracy, and Claude achieved perfect performance. 

At 10-digit complexity, all models exhibited complete breakdown (0\% overall accuracy) but through different failure modes. GPT-4o showed degraded step-wise accuracy (76--100\%), while both Gemini and Claude maintained excellent step accuracy (95--100\%) but failed completely at final summation.

These results provide direct empirical evidence for the instruction-execution disconnect: models can follow algorithmic procedures perfectly yet systematically fail at computational implementation, demonstrating fundamental limitations in bridging comprehension and competence even under optimal procedural guidance.

\end{document}